\documentclass{article}

\usepackage{PRIMEarxiv}
\usepackage[utf8]{inputenc}
\usepackage[T1]{fontenc}
\usepackage{hyperref}
\usepackage{url}
\usepackage{booktabs}
\usepackage{amsfonts}
\usepackage{amsmath}
\usepackage{amsthm}
\usepackage{amssymb}
\usepackage{nicefrac}
\usepackage{microtype}
\usepackage{lipsum}
\usepackage{fancyhdr}
\usepackage{graphicx}
\usepackage{natbib}
\usepackage{algorithm}
\usepackage{algorithmic}
\usepackage{bbm}
\usepackage{xspace}
\usepackage{multirow}
\usepackage{enumitem}
\usepackage{placeins}
\usepackage{etoolbox}
\usepackage{mathtools}
\usepackage{subcaption}
\graphicspath{{media/}}
\theoremstyle{plain}
\newtheorem{proposition}{Proposition}
\newtheorem{lemma}{Lemma}
\newtheorem{theorem}{Theorem}

\DeclareMathOperator{\diag}{diag}

\newcommand{\odiv}{\mathbin{/}\mkern-6mu/}
\DeclareMathOperator{\rank}{rank}

\makeatletter
\def\fps@figure{htbp}
\def\fps@table{htbp}
\makeatother

\newcommand{\Dtwo}{D\texorpdfstring{$^{2}$}{2}}
\newcommand{\DtwoLoRA}{\Dtwo-LoRA\xspace}

\title{\texorpdfstring{\DtwoLoRA: A Synergistic Approach to Differential and Directional Low-Rank Adaptation}{D2-LoRA: A Synergistic Approach to Differential and Directional Low-Rank Adaptation}}

\author{
  Nozomu Fujisawa \\
  Keio University \\
  \texttt{\nolinkurl{nozomu_fujisawa@acsl.ics.keio.ac.jp}} \\
  \And
  Masaaki Kondo \\
  Keio University \\
  \texttt{\nolinkurl{kondo@acsl.ics.keio.ac.jp}} \\
}

\begin{document}
\maketitle

\begin{abstract}
We systematically investigate the parameter-efficient fine-tuning design space under practical data and compute constraints, yielding \DtwoLoRA---a method that achieves \textbf{76.4\%} average accuracy across eight QA/RC benchmarks using \textbf{only 5k training samples per task} and \textbf{two epochs}, while preserving algebraic mergeability at inference with near-exact numerical equivalence. The core idea combines a signed low-rank residual $\Delta W = \tfrac{\alpha}{r}(A_+B_+ - \tau A_-B_-)$ that admits both additive and subtractive updates with a train-time columnwise projection that constrains each column to its original norm. After training, the adapter collapses into a single matrix $\widehat{W}$ incurring zero additional inference latency. Compared with LoRA, \DtwoLoRA achieves a \textbf{+2.2\,pp} improvement on average; at matched parameter counts (LoRA at rank $2r$ vs.\ \DtwoLoRA at rank $r$), the improvement is \textbf{+1.6\,pp}, confirming that gains stem from architectural innovations rather than increased parameterization. Compared with DoRA, it matches or exceeds performance on most tasks. Beyond QA/RC, \DtwoLoRA improves generative tasks (\textbf{+1.2\,pp} ROUGE-L, \textbf{+1.1\%} win rate) and exhibits \textbf{36\% lower training volatility}. The merge preserves numerical fidelity (mean gap $\approx 0.03$\,pp) and recovers \textbf{$\sim$1.91$\times$} evaluation throughput. Training overhead is \textbf{19\%}---comparable to DoRA---and decreases with longer input sequences. We provide a geometric analysis explaining how the proposed projection stabilizes training, together with ablation studies isolating the contribution of each design component.
\end{abstract}

\keywords{Low-rank adaptation \and Parameter-efficient fine-tuning \and Large language models}

\section{Introduction}
Fine-tuning large-scale transformers is often constrained by practical resource limitations rather than methodological considerations. Parameter-efficient fine-tuning (PEFT) addresses the common scenario where practitioners have access to limited compute---a single GPU over a weekend rather than a cluster for a month. LoRA~\citep{hu2022lora} exemplifies practical efficiency by learning a low-rank residual that can be merged into the base weights, thereby preserving inference efficiency. DoRA~\citep{liu2024dora} adopts an alternative strategy: updating only directional components while keeping magnitudes fixed, which improves training stability but may constrain representational capacity at low ranks.

We investigate the low-data, low-rank regime and identify two key desiderata for effective adaptation: (i)~regulation of column-wise magnitudes during optimization, and (ii)~preservation of both additive and subtractive representational capacity. \DtwoLoRA satisfies both requirements. It introduces a signed residual---a second branch scaled by $\tau$ that acts in opposition to the first---while a train-time projection constrains $W_0 + \Delta W$ to lie on the product of spheres defined by the original column norms. Since the projection operates only during training, \DtwoLoRA retains the key advantage of LoRA: a single merged weight matrix at inference with no additional computational overhead.

\paragraph{Contributions.}
\textbf{(1) Mergeable design space.} We propose a PEFT formulation that \emph{strictly subsumes} LoRA and can recover a DoRA-like variant by toggling two orthogonal dimensions: signed residuals and directional projection.
\textbf{(2) Theory that predicts stability.} A geometric analysis reveals how the proposed projection removes radial gradient components and enforces Lipschitz continuity—precisely the stability needed in small-data, low-rank regimes. Empirically, \DtwoLoRA exhibits \textbf{36\% lower loss volatility} than LoRA.
\textbf{(3) Comprehensive evaluation.} Under a strict training budget (5k samples, two epochs) across eight QA/RC benchmarks, two generative tasks, and two backbones, \DtwoLoRA achieves \textbf{76.4\%} macro accuracy (\textbf{+2.2\,pp} over LoRA) with consistent generative gains (\textbf{+1.2\,pp} ROUGE-L, \textbf{+1.1\%} win rate), while maintaining near-exact merge fidelity and recovering \textbf{$\sim\!1.91\times$} inference throughput. At matched parameters, the \textbf{+1.6\,pp} advantage confirms that architectural design outweighs capacity scaling.
\textbf{(4) Attribution-aware ablations.} We dissect the performance gains by attributing improvements to each factor: the signed negative branch, module targeting, rank, scoring metric, and the fixed~$\tau$. We also provide direct evidence of stability improvements via quantitative plots.

\section{Related Work}
\label{sec:related}
\textbf{Adapters and prompts.}
Adapters introduce small trainable bottlenecks~\citep{houlsby2019parameter,chen2022adaptformer,sung2022vladapter,lu2024uniadapter}; BitFit updates biases~\citep{ben2022bitfit}. Prompt-based approaches tune continuous prompts~\citep{lester2021power,li2021prefix,jia2022vpt}.

\textbf{Low-rank updates.}
LoRA~\citep{hu2022lora} trains mergeable low-rank residuals; QLoRA~\citep{dettmers2023qlora} trades precision for memory; AdaLoRA~\citep{zhang2023adalora} adapts rank budgets; NOLA compresses adapters~\citep{koohpayegani2024nola}. These methods do not directly constrain per-column magnitudes during training. We focus comparisons on LoRA and DoRA as they represent the two dominant paradigms (unconstrained vs.\ directionally-constrained updates) and share \DtwoLoRA's key property of algebraic mergeability. Methods like QLoRA address orthogonal concerns (memory efficiency) and AdaLoRA requires additional hyperparameter tuning for rank allocation, making direct comparison less informative for isolating the benefits of signed residuals and directional projection.

\textbf{Directional factorization.}
DoRA~\citep{liu2024dora} decouples magnitude and direction, updating the latter. \DtwoLoRA extends this by restoring residual expressivity through signed branches while maintaining directional constraints. While Differential Transformer~\citep{ye2025differential} applies differential principles in attention space, \DtwoLoRA operates in parameter space for adaptation without architectural changes.

\begin{table}[t]
\centering
\small
\caption{PEFT design choices and their implications.}
\label{tab:rw-axes}
\begin{tabular}{lcccc}
\toprule
Method & Mergeable & Norm control & Small-data stability & Rank expressivity \\
\midrule
LoRA & \checkmark & \phantom{\checkmark} & \(\triangle\) & \(r\) \\
DoRA & \checkmark & \checkmark & \(\triangle\) & \(r\) \\
\textbf{\DtwoLoRA} & \checkmark & \checkmark & \checkmark & \(2r\) \\
\bottomrule
\end{tabular}
\end{table}

\section{\DtwoLoRA model architecture}
\label{sec:method}

\noindent\textbf{Preliminaries.} 
Let $W_0\!\in\!\mathbb{R}^{d_{\text{out}}\times d_{\text{in}}}$ be a frozen linear map with bias $b$. For $X\in\mathbb{R}^{d_{\text{out}}\times d_{\text{in}}}$, define $\|X\|_{2,\mathrm{col}}\in\mathbb{R}^{d_{\text{in}}}$ by $\bigl(\|X\|_{2,\mathrm{col}}\bigr)_j=\|X^{[:,j]}\|_2$. We follow the \texttt{nn.Linear} convention that stores weights transposed relative to the forward; formulas below are presented in $(d_{\text{out}},d_{\text{in}})$ for clarity.

\noindent\textbf{Overview.} Figure~\ref{fig:overview} summarizes the architecture at a glance: a signed low-rank residual
$\Delta W=\tfrac{\alpha}{r}(A_+B_+-\tau A_-B_-)$ is composed with a \emph{train-time} columnwise
directional projection that preserves the baseline magnitudes of $W_0$; at inference we merge into a single matrix to avoid any latency.

\paragraph{Signed low-rank residual.}
We maintain two factor pairs $A_\pm\!\in\!\mathbb{R}^{d_{\text{in}}\times r}$, $B_\pm\!\in\!\mathbb{R}^{r\times d_{\text{out}}}$ and define
\begin{equation}
\Delta W^\top \;=\; \frac{\alpha}{r}\,\bigl(A_+B_+ - \tau\,A_-B_-\bigr),
\label{eq:signed}
\end{equation}
with a scalar $\tau$ (fixed by default), yielding $\rank(\Delta W)\!\le\!2r$.

\paragraph{Directional projection (train time only).}
Let $\mathbf m=\|W_0\|_{2,\mathrm{col}}$, $\mathbf d=\|W_0+\Delta W\|_{2,\mathrm{col}}$ with entries $d_j$, and define
$\mathbf d_\varepsilon=\max(\mathbf d,\varepsilon\mathbf 1)$ (i.e., $d^\varepsilon_j=\max(d_j,\varepsilon)$).
We normalize columns to the baseline magnitudes:
\begin{equation}
W^\star \;=\; (W_0+\Delta W)\cdot\diag(\mathbf m\odiv \mathbf d_\varepsilon).
\label{eq:dirnorm}
\end{equation}
Only the residual path receives input dropout; the backbone path remains untouched. 
At inference we \emph{merge} $W^\star$ and $\Delta W$ as
\begin{equation}
\widehat W \;=\; W^\star + \Delta W,
\label{eq:merge}
\end{equation}
thereby the latency and FLOPs are equal to those of a single linear layer.

\paragraph{Why add $\Delta W$ twice?}
During training, $y = xW^{\star\top} + \mathcal{D}(x)\Delta W^\top + b$ creates two signal paths: the \emph{directional branch} ($W^\star$) provides norm-controlled updates, while the \emph{residual branch} ($\Delta W$) provides unconstrained capacity with dropout. At inference ($\mathcal{D} = \mathrm{Id}$), both contribute equally via $\widehat{W} = W^\star + \Delta W$. Ablations confirm this dual-path design is essential: removing the residual branch reduces accuracy by $-0.6$\,pp (Table~\ref{tab:minus-mdrop}).

\paragraph{Configuration Knobs.}
Disabling \eqref{eq:dirnorm} recovers the original LoRA formulation; setting the minus branch to zero yields a DoRA-like variant. The primary design axes are therefore: the signed residual (enabled/disabled), the directional projection (enabled/disabled), and the scalar parameter $\tau$.

\begin{figure}[t]
\centering
\includegraphics[width=0.92\linewidth]{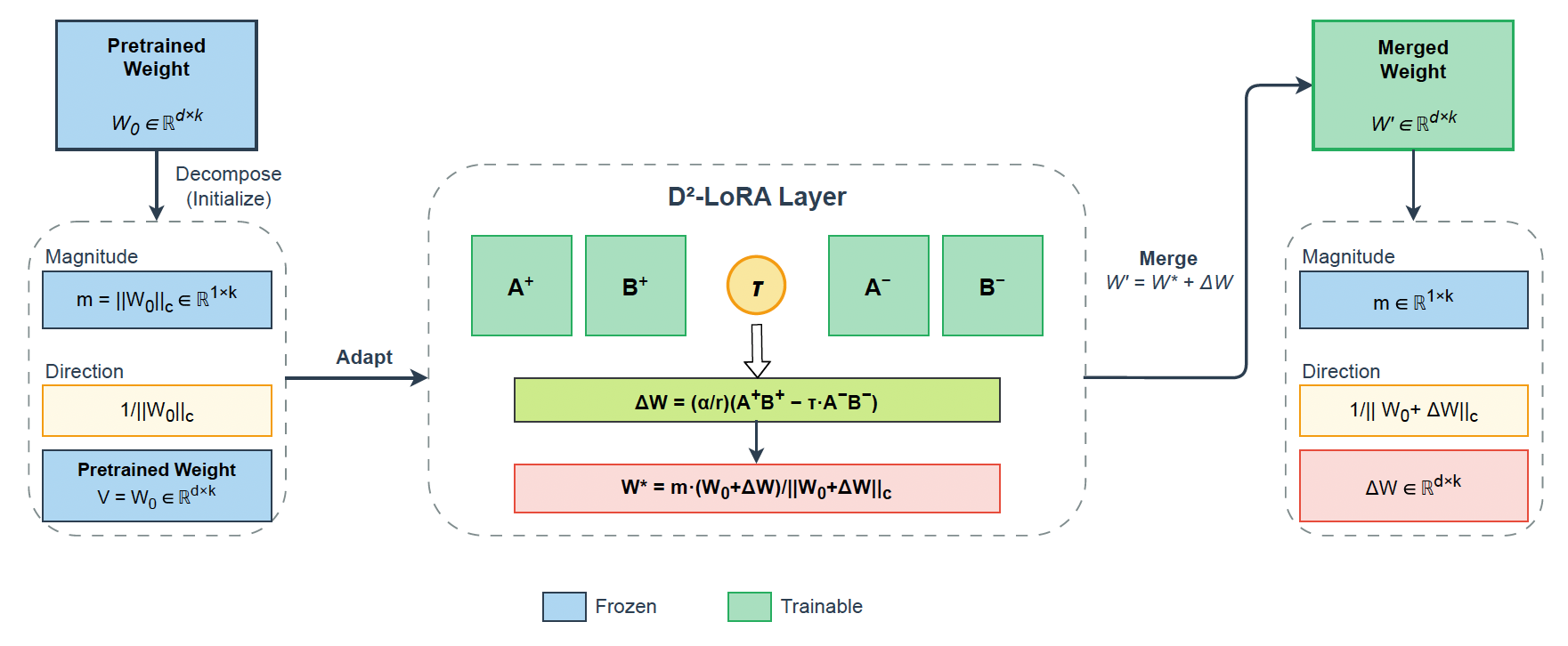}
\caption{\textbf{Overview of \DtwoLoRA.}
Left: columnwise magnitude $\,\mathbf m=\|W_0\|_{2,\mathrm{col}}\,$ from the pretrained weight $W_0$.
Middle: differential signed low-rank residual with scale $\tau$ and a \emph{train-time} directional projection that
preserves the column norms of $W_0$. Right: inference uses the merged weight $\widehat W=W^\star+\Delta W$, so latency equals a single linear layer.}
\label{fig:overview}
\end{figure}

\begin{algorithm}[t]
\caption{\texorpdfstring{\DtwoLoRA}{D2-LoRA} for one linear projection}
\label{alg:d2lora}
\begin{algorithmic}[1]
\REQUIRE Frozen $W_0$, rank $r$, scale $\alpha$, fixed $\tau$, $\varepsilon>0$
\STATE Initialize $A_+,A_-,B_+,B_-$ (smaller std for $A_-$); set $\mathbf m=\|W_0\|_{2,\mathrm{col}}$
\FOR{each step}
  \STATE $\Delta W^\top \leftarrow \tfrac{\alpha}{r}(A_+B_+ - \tau A_-B_-)$
  \STATE $\mathbf d\leftarrow \|W_0+\Delta W\|_{2,\mathrm{col}}$;\quad $\mathbf d_\varepsilon\leftarrow \max(\mathbf d,\varepsilon\mathbf 1)$;\quad
         $W^\star\leftarrow (W_0+\Delta W)\diag(\mathbf m\odiv \mathbf d_\varepsilon)$
  \STATE $y\leftarrow xW^{\star\top} + \mathcal{D}(x)\Delta W^\top + b$ \hfill (residual-path dropout only)
\ENDFOR
\STATE \textbf{Merge:} $\widehat W\leftarrow W^\star+\Delta W$
\end{algorithmic}
\end{algorithm}

\paragraph{Rationale for small-minus initialization.}
Our initialization strategy directly impacts the optimization trajectory. With $\Delta W^\top=\frac{\alpha}{r}(A_+B_+ - \tau A_-B_-)$ and the directional projection $P$ in~\eqref{eq:dirnorm}, we initialize $A_+ \sim \mathcal{N}(0, \sigma^2I)$ and $A_- \sim \mathcal{N}(0, 0.01\sigma^2I)$, while $B_\pm$ start at zero. This 10$\times$ variance reduction for the negative branch serves three critical purposes: (1) it prevents destructive cancellation $W_0+\Delta W\approx 0$ that would render the Jacobian of $P$ ill-conditioned through the $1/\|W_0+\Delta W\|$ term, (2) it establishes a natural learning curriculum where the model first learns additive patterns before discovering which features to suppress, and (3) it acts as an implicit gating mechanism that gradually activates the negative branch as training progresses.

\textbf{Empirical validation.} Table~\ref{tab:init-sensitivity} (Appendix~\ref{app:init-sensitivity}) demonstrates that standard initialization ($A_- = A_+$) causes training instability on 3/8 datasets (NaN loss within 50 steps), while our 10$\times$ reduction achieves stable training on all datasets with the highest accuracy (76.4\%). This confirms that the asymmetric initialization is necessary for stable optimization, not merely a design choice.

\section{Theory}
\label{sec:theory}

\subsection{\DtwoLoRA Stability}

\begin{proposition}[Expressivity gap of signed low-rank updates]
\label{prop:expressivity}
For rank $r$, LoRA admits updates $\Delta W$ with $\rank(\Delta W)\le r$. By contrast, \DtwoLoRA produces $\Delta W^\top=\frac{\alpha}{r}(A_+B_+-\tau A_-B_-)$ with $\rank(\Delta W)\le 2r$. Moreover, for any rank-$r$ matrix $M$, there exist $(A_\pm,B_\pm,\tau)$ such that $\Delta W=M$ (e.g., $A_-=0$), and for generic $(A_\pm,B_\pm)$ with $\tau\ne 0$ the column space of $\Delta W$ equals the direct sum of the column spaces of $A_+B_+$ and $A_-B_-$ up to dimension $2r$.
\end{proposition}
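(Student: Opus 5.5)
The plan is to prove the three claims in turn, each with elementary linear algebra.

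\emph{Rank bounds.} Rank is subadditive and $\rank(XY)\le\min(\rank X,\rank Y)$. Each product $A_\pm B_\pm$ factors through $\mathbb{R}^r$, so $\rank(A_\pm B_\pm)\le r$. Hence $\rank\bigl(A_+B_+-\tau A_-B_-\bigr)\le 2r$. Transposition and nonzero scaling by $\alpha/r$ leave rank unchanged, so $\rank(\Delta W)\le 2r$. The LoRA bound $\rank(\Delta W)\le r$ follows in the same way from the single product.

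\emph{Realizing any rank-$r$ matrix.} Let $M$ have rank $r$ (the argument also works for rank at most $r$). Take a rank factorization $M^\top = UV$ with $U\in\mathbb{R}^{d_{\text{in}}\times r}$ and $V\in\mathbb{R}^{r\times d_{\text{out}}}$, for example from a thin SVD. Set $A_+=U$, $B_+=\tfrac{r}{\alpha}V$, $A_-=0$, $B_-$ arbitrary, and $\tau$ arbitrary. Then $\Delta W^\top=M^\top$, so $\Delta W=M$. This shows LoRA's attainable set is contained in \DtwoLoRA's.

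\emph{Generic direct-sum statement.} I read ``column space of $\Delta W$'' as the column space of $\Delta W^\top$. This is the paper's stored orientation; the other orientation is handled symmetrically using row spaces. Write $P=A_+B_+$ and $N=A_-B_-$, and let $k=\min(2r,d_{\text{in}},d_{\text{out}})$. In general $\operatorname{col}(P-\tau N)\subseteq\operatorname{col}(P)+\operatorname{col}(N)$.

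The claim to establish is that, off a proper algebraic subset of parameter space, $\operatorname{col}(P)\cap\operatorname{col}(N)$ is as small as dimensions allow, $\operatorname{col}(P)+\operatorname{col}(N)$ has dimension $k$, and $\rank(P-\tau N)=k$. Once this holds, the inclusion above plus equality of dimensions forces
\[
\operatorname{col}(P-\tau N)=\operatorname{col}(P)+\operatorname{col}(N).
\]
When $2r\le\min(d_{\text{in}},d_{\text{out}})$ the intersection is trivial, so the sum is direct, which is exactly the stated claim.

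The rank condition is the main step. Write
\[
P-\tau N=[A_+\;\,A_-]\begin{bmatrix}B_+\\-\tau B_-\end{bmatrix}.
\]
This product has rank $k$ exactly when some $k\times k$ minor is nonzero. Every such minor is a polynomial in the entries of $(A_\pm,B_\pm)$ for fixed $\tau\ne 0$. A polynomial that is not identically zero vanishes only on a measure-zero, nowhere-dense set, and that set is what ``generic'' excludes. It therefore suffices to exhibit one parameter choice with full rank $k$. Take $[A_+\;A_-]$ to contain a $k\times k$ identity block and $B_\pm$ to be coordinate selectors. The $-\tau$ factor only rescales rows, so the product keeps rank $k$ because $\tau\ne0$. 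The same witness gives $\dim(\operatorname{col}P+\operatorname{col}N)=k$. Intersecting these finitely many nonempty Zariski-open conditions gives a generic set on which all the claims hold simultaneously.

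The hardest part is pinning down what ``generic'' and ``up to dimension $2r$'' mean. I would state explicitly that the statement is understood as capped at $\min(2r,d_{\text{in}},d_{\text{out}})$, and that the role of $\tau\ne0$ is to avoid $P-\tau N=P$.
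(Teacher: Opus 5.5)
\paragraph{Overall.} Your argument follows the paper's route: rank subadditivity for the $2r$ bound, $A_-=0$ with $A_+B_+=\tfrac{r}{\alpha}M^\top$ to embed LoRA, and a genericity argument for full rank. In the regime $2r\le\min(d_{\text{in}},d_{\text{out}})$ it is correct, and it is more rigorous than the paper. The paper only asserts that the column spaces are ``in general position'', so that $\rank(A_+B_+-\tau A_-B_-)=\rank([A_+B_+\;\,A_-B_-])=2r$ on a dense open set. Your factorization through $[A_+\;\,A_-]$, the nonvanishing-minor criterion and the explicit witness actually prove this.

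In that regime you do not even need a separate condition on $\operatorname{col}P+\operatorname{col}N$. That sum has dimension at most $2r$ and contains $\operatorname{col}(P-\tau N)$, so full rank of $P-\tau N$ alone forces both the equality and the trivial intersection. Your reading of ``column space of $\Delta W$'' as that of $\Delta W^\top$ also repairs a dimension mismatch ($\mathbb{R}^{d_{\text{out}}}$ versus $\mathbb{R}^{d_{\text{in}}}$) that the paper never addresses.

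\paragraph{The failing step.} Your capped generalization with $k=\min(2r,d_{\text{in}},d_{\text{out}})$ does not hold in general. Lower semicontinuity of rank makes $\dim(\operatorname{col}P+\operatorname{col}N)\ge k$ a Zariski-open condition, but not $\dim(\operatorname{col}P+\operatorname{col}N)=k$. A witness achieving ``$=k$'' only shows that this set is nonempty, not that it is open.

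The sum $\operatorname{col}P+\operatorname{col}N\subseteq\mathbb{R}^{d_{\text{in}}}$ is not limited by $d_{\text{out}}$, so its generic dimension is $\min(2r,d_{\text{in}},2d_{\text{out}})$. Whenever $d_{\text{out}}<\min(2r,d_{\text{in}})$, this strictly exceeds $k=d_{\text{out}}\ge\rank(P-\tau N)$. There the equality $\operatorname{col}(P-\tau N)=\operatorname{col}P+\operatorname{col}N$ fails for generic parameters. For example, take $d_{\text{in}}=4$ and $d_{\text{out}}=r=1$: then $P-\tau N$ is a single vector, while $\operatorname{col}P+\operatorname{col}N=\operatorname{span}(A_+,A_-)$ is generically a plane.

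\paragraph{Fix.} State the result under $2r\le\min(d_{\text{in}},d_{\text{out}})$, which is the natural reading of ``direct sum up to dimension $2r$''. Alternatively, for the equality with the sum alone in your orientation, require $d_{\text{out}}\ge\min(2r,d_{\text{in}})$. The paper's own proof makes the same dimensional assumption implicitly when it asserts rank $2r$ outright.
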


\begin{proof}[Proof sketch]
Write $\Delta W^\top=\frac{\alpha}{r}(A_+B_+-\tau A_-B_-)$ with $\rank(A_\pm B_\pm)\le r$.
Then $\rank(\Delta W)\le \rank(A_+B_+)+\rank(A_-B_-)\le 2r$.
For any rank-$r$ matrix $M$, set $A_-=0$ and choose $A_+B_+=\tfrac{r}{\alpha}M^\top$ to recover $\Delta W=M$.
For generic $(A_\pm,B_\pm)$ with $\tau\neq 0$, the column spaces of $A_+B_+$ and $A_-B_-$ are in general position, giving typical rank $2r$.
A complete proof is provided in Appendix~\ref{app:proofs}.
\end{proof}

\paragraph{Geometric role of $\tau$.}
Let $U_\pm=\operatorname{col}(A_\pm B_\pm)$ and let $\theta_1(U_+,U_-)$ denote the smallest principal angle. Scaling by $\tau$ controls the balance between constructive and destructive interference of $U_+$ and $U_-$. A balanced choice $\tau\!\approx\!1$ prevents domination by either branch, while $\tau\!\ll\!1$ collapses toward LoRA (positive branch only) and $\tau\!\gg\!1$ magnifies cancellation and can ill-condition early training. This explains the broad optimum $\tau\in[0.5,1.0]$ observed in Table~\ref{tab:tau-combined}.

The projection \eqref{eq:dirnorm} constrains each input column to a sphere of radius $m_j=\|W_0^{[:,j]}\|_2$.

\begin{proposition}[Norm preservation]
\label{prop:norm}
For any $\varepsilon>0$, $\|W^{\star[:,j]}\|_2=m_j$ whenever $\|W_0^{[:,j]}+\Delta W^{[:,j]}\|_2\ge\varepsilon$, and $\|W^{\star[:,j]}\|_2\le m_j$ otherwise.
\end{proposition}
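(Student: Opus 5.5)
The argument is a direct column-by-column computation from the definition \eqref{eq:dirnorm}. Right-multiplying by a diagonal matrix rescales each column independently. So the $j$-th column of $W^\star$ is
\begin{equation*}
W^{\star[:,j]} \;=\; \frac{m_j}{d^\varepsilon_j}\,\bigl(W_0^{[:,j]}+\Delta W^{[:,j]}\bigr),
\qquad d^\varepsilon_j=\max(d_j,\varepsilon),\quad d_j=\bigl\|W_0^{[:,j]}+\Delta W^{[:,j]}\bigr\|_2 .
\end{equation*}
Here $m_j\ge 0$, and $d^\varepsilon_j\ge\varepsilon>0$, so the scalar factor is well defined and nonnegative. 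Absolute homogeneity of the Euclidean norm then gives
\begin{equation*}
\|W^{\star[:,j]}\|_2=\frac{m_j\,d_j}{\max(d_j,\varepsilon)} .
\end{equation*}
I will state this identity first; both claims follow from it by splitting into two cases.

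\textbf{Case 1: $d_j\ge\varepsilon$.} Then $\max(d_j,\varepsilon)=d_j$, and $d_j\ge\varepsilon>0$, so the ratio $d_j/d_j$ is legitimately $1$. Hence $\|W^{\star[:,j]}\|_2=m_j$.

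\textbf{Case 2: $d_j<\varepsilon$.} Then the denominator is $\varepsilon$, so $\|W^{\star[:,j]}\|_2=m_j\,d_j/\varepsilon$. Since $0\le d_j<\varepsilon$, we have $d_j/\varepsilon\in[0,1)$, and multiplying by $m_j\ge 0$ gives $\|W^{\star[:,j]}\|_2\le m_j$. This inequality is in fact strict unless $m_j=0$. The degenerate subcase $d_j=0$ (exact cancellation) needs no special handling: the column is the zero vector, with norm $0\le m_j$.

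There is essentially no obstacle. The only point needing care is the well-definedness of the division, which the clamp by $\varepsilon>0$ guarantees. It is also worth noting that the statement concerns the columns of $W^\star$, not of the merged $\widehat W=W^\star+\Delta W$, whose column norms are not controlled by this result.
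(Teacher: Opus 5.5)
Your proof is correct and matches the paper's argument: write the $j$-th column as $\frac{m_j}{d_j^\varepsilon}u_j$, use homogeneity of the norm to get $\frac{m_j}{d_j^\varepsilon}\|u_j\|_2$, and split on whether $\|u_j\|_2\ge\varepsilon$. Your additional remarks are accurate points the paper does not make: the inequality is strict when $m_j>0$, the $d_j=0$ subcase is harmless, and the result says nothing about the columns of the merged $\widehat W$.
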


\begin{proof}[Proof sketch]
For column $j$, let $u_j=(W_0+\Delta W)^{[:,j]}$ and $d_j^\varepsilon=\max(\|u_j\|_2,\varepsilon)$.
Since $W^{\star[:,j]}=\frac{m_j}{d_j^\varepsilon}u_j$, we have
$\|W^{\star[:,j]}\|_2=\frac{m_j}{d_j^\varepsilon}\|u_j\|_2$, which equals $m_j$ if $\|u_j\|_2\ge\varepsilon$ and is $\le m_j$ otherwise.
See Appendix~\ref{app:proofs} for the complete proof.
\end{proof}

\begin{lemma}[Smoothness and Lipschitz control]
\label{lem:lips}
The map $P:\Delta W\mapsto W^\star$ is $C^1$ and locally Lipschitz with constant $C(\mathbf m,\varepsilon)$, so $\|P(\Delta W_1)-P(\Delta W_2)\|_F\le C\,\|\Delta W_1-\Delta W_2\|_F$.
\end{lemma}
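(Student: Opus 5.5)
The plan is to reduce everything to a single column and recognize the columnwise normalization as a rescaled metric projection onto a Euclidean ball. Write $u_j=(W_0+\Delta W)^{[:,j]}$ as in the proof of Proposition~\ref{prop:norm}, so that $W^{\star[:,j]}=m_j\,\phi_\varepsilon(u_j)$ with $\phi_\varepsilon(u)=u/\max(\|u\|_2,\varepsilon)$. The map $\Delta W\mapsto W_0+\Delta W$ is a translation, hence an isometry in $\|\cdot\|_F$. Moreover, $P$ acts independently on each column. It therefore suffices to show that each $\phi_\varepsilon$ is Lipschitz with an explicit constant and then assemble the columns.

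The key observation is that $\varepsilon\,\phi_\varepsilon(u)=u\,\varepsilon/\max(\|u\|_2,\varepsilon)$ is exactly the Euclidean projection $\Pi_{B_\varepsilon}(u)$ onto the closed ball of radius $\varepsilon$. This is the identity map inside the ball and radial rescaling to the sphere outside it. Projection onto a closed convex set is nonexpansive, so $\|\phi_\varepsilon(u)-\phi_\varepsilon(v)\|_2\le \varepsilon^{-1}\|u-v\|_2$. For readers who prefer not to invoke convex-projection theory, I would instead give a direct two-case check: both points inside the ball, or both outside, in which case one uses $\|u/\|u\|-v/\|v\|\|\le 2\|u-v\|/\max(\|u\|,\|v\|)$ or the sharper constant-$1$ argument. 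The mixed case then follows by restricting to the segment between the two points and splitting it at the sphere.

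Squaring and summing over columns then gives
\[
\|P(\Delta W_1)-P(\Delta W_2)\|_F^2=\sum_j m_j^2\|\phi_\varepsilon(u_j^{(1)})-\phi_\varepsilon(u_j^{(2)})\|_2^2\le \frac{\max_j m_j^2}{\varepsilon^2}\,\|\Delta W_1-\Delta W_2\|_F^2 ,
\]
so one can take $C(\mathbf m,\varepsilon)=\|\mathbf m\|_\infty/\varepsilon$. This bound is in fact global, which is stronger than the local statement. A sharper local constant is available in the regime of interest where $d_j\ge\delta>\varepsilon$: there the Jacobian of $u\mapsto m_j u/\|u\|$ is $\frac{m_j}{\|u\|}\bigl(I-\hat u\hat u^\top\bigr)$. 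This simultaneously exhibits the removal of the radial component and gives the constant $\max_j m_j/\delta$.

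The main obstacle is the $C^1$ claim. Because of the $\max$, $\phi_\varepsilon$ has a kink on the sphere $\|u_j\|=\varepsilon$. There the radial derivative jumps from $1/\varepsilon$ (inside) to $0$ (outside), so $P$ is not globally $C^1$. I would handle this by proving smoothness, indeed $C^\infty$, on the open set $\{\Delta W: d_j\neq\varepsilon\ \forall j\}$, which contains every configuration with $\|W_0^{[:,j]}+\Delta W^{[:,j]}\|_2>\varepsilon$. The small-minus initialization keeps training in this set. I would then state that $C^1$ should be read in this sense, or equivalently that $P$ is piecewise $C^1$, and note that on this set the differential is the projector formula above. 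If a genuinely global $C^1$ statement is required, the remedy is to replace $\max(d,\varepsilon)$ by a smooth surrogate such as $\sqrt{d^2+\varepsilon^2}$. The same Lipschitz argument then goes through via a Jacobian bound.
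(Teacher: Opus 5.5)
Your proof is correct. It ends with the same constant as the paper, $C(\mathbf m,\varepsilon)=\|\mathbf m\|_\infty/\varepsilon=\|\diag(\mathbf m)\|_2/\varepsilon$, and uses the same column-by-column assembly, but the key Lipschitz step takes a different route.

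The paper bounds the Jacobian $D\phi(u)=\|u\|^{-1}(I-uu^\top/\|u\|^2)$ by $1/\varepsilon$ on $\{\|u\|>\varepsilon\}$. It notes that $\phi$ is linear with slope $1/\varepsilon$ inside the ball, and then concludes that $\phi$ is globally $1/\varepsilon$-Lipschitz. Strictly speaking, two points are left implicit there. The set $\{\|u\|>\varepsilon\}$ is not convex, so a Jacobian bound on it does not directly control pairs whose connecting segment crosses the ball. The inner and outer pieces also still have to be glued across the sphere. Your fallback argument handles exactly these points: split the segment at the sphere, and use the sharp bound $\|u/\|u\|-v/\|v\|\|\le\|u-v\|/\sqrt{\|u\|\,\|v\|}$ in the ``both outside'' case.

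Your identification $\varepsilon\,\phi_\varepsilon=\Pi_{B_\varepsilon}$ removes all of this at once, because nonexpansiveness of a convex projection gives the global bound in one line. In exchange, the paper's route puts the tangent-projector form of the derivative front and center. The paper reuses that form for its VJP formulas and for the ``radial components are removed'' interpretation. You recover it as well in your sharper local constant $\|\mathbf m\|_\infty/\delta$.

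On smoothness, you agree with what the paper's proof actually establishes, namely $C^1$ only on $\mathcal U_\varepsilon$. You are more explicit that the unqualified $C^1$ claim in the statement fails on $\{\|u_j\|=\varepsilon\}$, where the radial derivative jumps from $1/\varepsilon$ to $0$. Your open set $\{d_j\neq\varepsilon\ \forall j\}$ is also slightly larger than the paper's, since it includes the linear interior regime.
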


\begin{proof}[Proof sketch]
Columnwise $\phi(u)=u/\|u\|_\varepsilon$ is $C^1$ on $\{\|u\|>\varepsilon\}$ with Jacobian
$D\phi(u)=\|u\|^{-1}\!\left(I-\frac{uu^\top}{\|u\|^2}\right)$ and operator norm $\le 1/\varepsilon$,
and linear with Lipschitz constant $1/\varepsilon$ when $\|u\|\le\varepsilon$.
Composing the affine map $\Delta W\mapsto W_0+\Delta W$ with $\phi$ and scaling by $\diag(\mathbf m)$
yields local Lipschitz continuity of $P$ with constant $C(\mathbf m,\varepsilon)$.
Full details appear in Appendix~\ref{app:proofs}.
\end{proof}

\paragraph{Benefits of directional constraints in low-data regimes.}
Let $\mathcal{M} = \prod_{j=1}^{d_{\rm in}} S^{d_{\rm out}-1}(m_j)$ denote the product of spheres defined by $\mathbf{m}$. The projection $P$ restricts the training dynamics to $\mathcal{M}$ while the residual branch provides additive and subtractive degrees of freedom via~\eqref{eq:signed}. When $n \ll d_{\rm out} d_{\rm in}$, overparameterized radial modes amplify variance; $P$ eliminates these modes by zeroing radial gradient components and aligning updates to the tangent bundle $T\mathcal{M}$. The resulting reduction in effective degrees of freedom provides implicit regularization. Empirically, this manifests as smoother loss trajectories and improved training stability at low rank, with the most pronounced benefits on HellaSwag and WinoGrande.

\paragraph{Merge-time equivalence.}
With dropout disabled, $x(W^\star+\Delta W)^\top+b=xW^{\star\top}+x\Delta W^\top+b$; merged and unmerged evaluation are thus identical up to numerical roundoff, matching our measurements.

\subsection{On choosing the scale \texorpdfstring{$\tau$}{\tau}}
\label{sec:choose-tau}
Under independent, zero-mean isotropic initializations for $(A_\pm,B_\pm)$, the two branches are uncorrelated so that
\[
\mathbb{E}\|\Delta W\|_F^2=\frac{\alpha^2}{r^2}\Big(\mathbb{E}\|A_+B_+\|_F^2+\tau^2\,\mathbb{E}\|A_-B_-\|_F^2\Big).
\]
Thus $\tau$ controls the energy injected by the negative branch. In the small-step regime with directional projection $P$ (\eqref{eq:dirnorm}), stability depends on the radial component of the update; balancing the branch energies ($\tau\approx 1$) reduces anisotropy before projection, while $\tau\ll1$ under-utilizes subtractive directions and $\tau\gg1$ increases cancellation risk. This yields a model-agnostic prior $\tau\in[0.5,1.0]$, consistent with Table~\ref{tab:tau-combined}.

\paragraph{Practical guidance for $\tau$ selection.}
Table~\ref{tab:tau-combined} shows that performance varies by at most 1.1\,pp across the tested range ($\tau \in \{0.25, 0.5, 1.0\}$), indicating low sensitivity. We recommend:
\begin{itemize}[nosep,leftmargin=*]
\item \textbf{Default:} $\tau = 0.5$ provides robust performance across models and tasks.
\item \textbf{If tuning budget allows:} Try $\tau \in \{0.5, 1.0\}$ (2 runs) and select based on validation loss.
\item \textbf{Model-specific trends:} Larger models (Qwen2.5-7B) slightly prefer $\tau = 0.5$; smaller models (Llama-3.2-3B) slightly prefer $\tau = 1.0$.
\end{itemize}
Unlike rank or target module selection, $\tau$ tuning provides marginal gains and can be safely fixed to 0.5 without significant performance loss.

\section{Experimental setup}
\label{sec:setup}
\textbf{Backbones.} \textit{Llama-3.2-3B-Instruct} and \textit{Qwen2.5-7B-Instruct}; all backbone weights frozen.

\textbf{Tasks.} Eight QA/RC datasets: BoolQ~\citep{clark2019boolq}, CommonsenseQA~\citep{talmor2019commonsenseqa}, RACE~\citep{lai2017race}, HellaSwag~\citep{zellers2019hellaswag}, WinoGrande (M)~\citep{sakaguchi2020winogrande}, ARC-Easy/Challenge~\citep{clark2018arc}, and OpenBookQA~\citep{mihaylov2018openbookqa}. We use a single multiple-choice template and consistent tokenization across methods.

\textbf{Budget and preprocessing.} Training uses at most 5{,}000 samples per dataset; validation uses 500 instances (ARC-Challenge: 299). The primary metric is option accuracy under conditional log-likelihood (CLL); we also report a generation-based scorer for comparison with identical decoding/truncation settings.

\textbf{Defaults.} Unless specified: rank $r\!=\!32$ with $\alpha\!=\!64$ for Llama-3.2-3B-Instruct, and rank $r\!=\!16$ with $\alpha\!=\!32$ for Qwen2.5-7B-Instruct (so that $\alpha/r\!=\!2$ on both backbones); a single global $\tau\!=\!0.5$ fixed a priori for the main tables (ablations for $\tau\!\in\!\{0.25,0.5,1.0\}$ are in Sec.~\ref{sec:ablations}); residual-path input dropout $0.1$, matrix-dropout off, AdamW with cosine LR, batch $16$ with grad-accum $2$, two epochs. We adapt $q,k,v,o$ unless noted otherwise.

\subsection{Main Results}
\label{sec:main-results}
Table~\ref{tab:main-combined} summarizes the main results across both backbones. On Llama-3.2-3B-Instruct, \DtwoLoRA achieves consistent improvements across all eight datasets, attaining a macro average of 76.4\%---a \textbf{+2.2\,pp} improvement over LoRA (74.2\%). The largest gains occur on tasks requiring fine-grained linguistic discrimination: WinoGrande improves by \textbf{+4.8\,pp} (64.0\% vs.\ 59.2\%), and HellaSwag by \textbf{+4.8\,pp} (82.4\% vs.\ 77.6\%). These results suggest that the combination of signed residuals and directional constraints captures semantic patterns that standard LoRA fails to model effectively.

On Qwen2.5-7B-Instruct, \DtwoLoRA achieves the highest macro average (85.7\%) with competitive performance across all tasks, demonstrating that the method generalizes across model scales.

\begin{table}[t]
\centering
\small
\caption{Main results: Accuracy (\%) on 8 QA/RC tasks with CLL scoring. Best per backbone in \textbf{bold}.}
\label{tab:main-combined}
\resizebox{\textwidth}{!}{%
\begin{tabular}{l|ccc|ccc}
\toprule
& \multicolumn{3}{c|}{\textbf{Llama-3.2-3B-Instruct (2 epochs)}} & \multicolumn{3}{c}{\textbf{Qwen2.5-7B-Instruct (1 epoch)}} \\
Dataset & LoRA & DoRA & \DtwoLoRA & LoRA & DoRA & \DtwoLoRA \\
\midrule
ARC-Challenge & 72.6 & 72.2 & \textbf{73.9} & \textbf{89.6} & 88.3 & \textbf{89.6} \\
ARC-Easy      & 87.2 & \textbf{87.6} & 87.4 & \textbf{96.2} & \textbf{96.2} & \textbf{96.2} \\
BoolQ         & 62.6 & 61.8 & \textbf{65.6} & 82.0 & 84.2 & \textbf{85.2} \\
CommonsenseQA & 73.4 & 74.8 & \textbf{75.4} & \textbf{82.8} & 81.8 & 82.2 \\
HellaSwag     & 77.6 & 81.8 & \textbf{82.4} & 90.2 & 90.0 & \textbf{91.0} \\
OpenBookQA    & 81.8 & \textbf{83.4} & 82.6 & 87.0 & 88.0 & \textbf{89.0} \\
RACE          & 79.0 & \textbf{79.6} & \textbf{79.6} & \textbf{90.2} & \textbf{90.2} & 89.6 \\
WinoGrande    & 59.2 & 62.6 & \textbf{64.0} & \textbf{63.6} & 61.6 & 63.0 \\
\midrule
\textbf{Average} & 74.2 & 75.5 & \textbf{76.4} & 85.2 & 85.0 & \textbf{85.7} \\
\bottomrule
\end{tabular}%
}
\end{table}

\paragraph{Head-to-head comparison (Llama-3.2-3B-Instruct).}
Against LoRA: \textbf{8 wins, 0 ties, 0 losses}.  
Against DoRA: \textbf{5 wins, 1 tie, 2 losses}.  
Consistency across diverse tasks supports the design choices.

\emph{Protocol note:} We employ a single global $\tau\!=\!0.5$ across all experiments to ensure fair comparison. While $\tau\!=\!1.0$ yields marginal improvements (+0.74\,pp, Sec.~\ref{sec:ablations}), we prioritize generalization over task-specific tuning.

\subsection{CLL vs.\ generation scoring}
\label{sec:cll-vs-gen}
CLL consistently outperforms generation-based scoring by $+0.9$--$+1.3$\,pp across methods (Table~\ref{tab:cll}), with the largest gap on HellaSwag (+5.6\,pp). \DtwoLoRA shows the largest CLL advantage (+1.3\,pp), suggesting its representations are well-suited for discriminative evaluation.

\begin{table}[t]
\centering
\small
\caption{CLL vs generation (Llama-3.2-3B-Instruct, $r{=}32$). Entries show CLL$-$Gen in pp.}
\label{tab:cll}
\begin{tabular}{lccc}
\toprule
Dataset & LoRA & DoRA & \DtwoLoRA \\
\midrule
ARC-Challenge & +2.0 & +0.3 & +0.7 \\
ARC-Easy      & +0.6 & +0.4 & +0.2 \\
BoolQ         & +0.6 & $-1.4$ & +1.2 \\
CommonsenseQA & +0.0 & +0.0 & +1.2 \\
HellaSwag     & +2.6 & +5.4 & +5.6 \\
OpenBookQA    & $-0.4$ & +0.0 & +0.6 \\
RACE          & $-0.2$ & +0.6 & +0.4 \\
WinoGrande    & +1.4 & +2.0 & +0.4 \\
\midrule
\textbf{Average} & \textbf{+0.9} & \textbf{+0.9} & \textbf{+1.3} \\
\bottomrule
\end{tabular}
\end{table}

\subsection{Merge Equivalence and Throughput}
\label{sec:merge}
Practical deployment requires merging the adapter into base weights without degrading performance. \DtwoLoRA maintains near-exact numerical equivalence between merged and unmerged configurations. On Llama-3.2-3B-Instruct, the mean accuracy difference is 0.03\,pp, with a maximum deviation of 0.7\,pp on ARC-Challenge---well within typical evaluation variance (Table~\ref{tab:merge-combined}). The train-time projection does not perturb the algebraic merge: with dropout disabled, $x(W^\star + \Delta W)^\top = xW^{\star\top} + x\Delta W^\top$ holds up to floating-point precision.

\begin{table}[t]
\centering
\footnotesize
\caption{\DtwoLoRA merge equivalence across backbones. Accuracy (\%) and post-merge evaluation speedup. Speedup is computed as (unmerged \DtwoLoRA evaluation time) / (merged \DtwoLoRA evaluation time), i.e., comparing merged vs.\ unmerged configurations of the \emph{same} \DtwoLoRA model. This measures the inference benefit of merging, not comparison against other methods.}
\label{tab:merge-combined}
\begin{tabular}{llccc}
\toprule
Backbone & Dataset & Unmerged & Merged & Speedup \\
\midrule
\multirow{9}{*}{Llama-3.2-3B-Instruct}
 & ARC-Challenge & 73.9 & 73.2 & $1.89\times$ \\
 & ARC-Easy      & 87.4 & 87.4 & $1.87\times$ \\
 & BoolQ         & 65.6 & 65.6 & $1.98\times$ \\
 & CommonsenseQA & 75.4 & 74.8 & $1.82\times$ \\
 & HellaSwag     & 82.4 & 82.6 & $2.03\times$ \\
 & OpenBookQA    & 82.6 & 83.0 & $1.85\times$ \\
 & RACE          & 79.6 & 79.8 & $2.02\times$ \\
 & WinoGrande    & 64.0 & 64.2 & $1.85\times$ \\
 & \textbf{Average} & \textbf{76.36} & \textbf{76.33} & \textbf{$1.91\times$} \\
\midrule
\multirow{9}{*}{Qwen2.5-7B-Instruct}
 & ARC-Challenge  & 89.6 & 89.6 & $2.31\times$ \\
 & ARC-Easy       & 96.2 & 96.2 & $2.30\times$ \\
 & BoolQ          & 85.2 & 84.6 & $2.01\times$ \\
 & CommonsenseQA  & 82.2 & 81.8 & $2.31\times$ \\
 & HellaSwag      & 91.0 & 90.8 & $1.90\times$ \\
 & OpenBookQA     & 89.0 & 88.8 & $2.29\times$ \\
 & RACE           & 89.6 & 89.2 & $1.71\times$ \\
 & WinoGrande     & 63.0 & 61.8 & $2.27\times$ \\
 & \textbf{Average} & \textbf{85.4} & \textbf{85.3} & \textbf{$2.14\times$} \\
\bottomrule
\end{tabular}
\end{table}

\subsection{Ablations}
\label{sec:ablations}
\textbf{Rank.} Reducing $r$ from $32$ to $16$ costs $-2.2$\,pp on average; $r\!=\!8$ retains $73.6$\% (Table~\ref{tab:rank-targets}). A practical threshold emerges around $r{=}16$ under our budget: most of the signed-branch expressivity is realized by $r{=}16$, with diminishing returns beyond.
\textbf{Target modules.} Adapting $q,v,o$ (dropping $k$) saves $\sim 25\%$ parameters for $-0.3$\,pp; restricting to $q,k$ halves parameters but hurts accuracy (Table~\ref{tab:rank-targets}).  
\textbf{Negative branch.} Co-training the $(-)$ branch matters; disabling or gradient-detaching underperforms (Table~\ref{tab:minus-mdrop}).  
\textbf{Matrix dropout.} Helps BoolQ but hurts ARC-Challenge on average; we keep it off by default (Table~\ref{tab:minus-mdrop}).  
\textbf{Fixed $\tau$.} Macro averages for $\tau\!\in\!\{0.25,0.5,1.0\}$ (Llama-3.2-3B-Instruct) and $\{0.5,1.0\}$ (Qwen2.5-7B-Instruct) are reported jointly in Table~\ref{tab:tau-combined}; the figure visualizes the same numbers.

\textbf{Asymmetric rank allocation.} We also explore asymmetric ranks for the $\pm$ branches (Appendix~\ref{app:asym}, Table~\ref{tab:asym}). At fixed total parameter budget (1.58M), mild asymmetry ($r_+ = 24$, $r_- = 8$) yields marginal improvement (+0.2\,pp), but strong asymmetry ($r_+ = 28$, $r_- = 4$ or $r_+ = 8$, $r_- = 24$) degrades performance. Symmetric ranks ($r_+ = r_- = 16$) achieve near-optimal results and are simpler to configure.

\textbf{MLP layers.} We investigate adapting MLP layers (gate, up, down projections) in addition to attention. Appendix~\ref{app:mlp} (Table~\ref{tab:mlp}) shows that adding MLP layers yields diminishing returns: +0.4\,pp for gate+up (2.37M params), +0.7\,pp for all MLP layers (3.16M params). Given the 3$\times$ parameter increase for only +0.8\,pp improvement, attention-only adaptation provides the best efficiency-performance tradeoff.

\begin{table}[t]
\centering
\small
\caption{Ablation studies (Llama-3.2-3B-Instruct, CLL scoring): (a)~effect of rank $r$; (b)~effect of target modules ($r{=}32$).}
\label{tab:rank-targets}

\begin{minipage}{0.48\linewidth}
\centering
\subcaption{Effect of rank $r$}
\vspace{0.3em}
\begin{tabular}{lccc}
\toprule
Dataset & $r{=}32$ & $r{=}16$ & $r{=}8$ \\
\midrule
ARC-Challenge & \textbf{73.9} & 72.2 & 73.2 \\
ARC-Easy      & \textbf{87.4} & \textbf{87.4} & 86.6 \\
BoolQ         & \textbf{65.6} & 59.2 & 60.8 \\
CommonsenseQA & \textbf{75.4} & 75.0 & 74.0 \\
HellaSwag     & \textbf{82.4} & 78.6 & 74.8 \\
OpenBookQA    & 82.6 & \textbf{82.8} & 82.6 \\
RACE          & \textbf{79.6} & 79.2 & 79.0 \\
WinoGrande    & \textbf{64.0} & 58.8 & 57.8 \\
\midrule
\textbf{Average} & \textbf{76.4} & 74.2 & 73.6 \\
\bottomrule
\end{tabular}
\end{minipage}%
\hfill
\begin{minipage}{0.48\linewidth}
\centering
\subcaption{Effect of target modules}
\vspace{0.3em}
\begin{tabular}{lccc}
\toprule
Dataset & $q,k,v,o$ & $q,v,o$ & $q,k$ \\
\midrule
ARC-Challenge & 73.9 & 72.6 & \textbf{74.2} \\
ARC-Easy      & 87.4 & \textbf{87.6} & 86.0 \\
BoolQ         & \textbf{65.6} & 61.6 & 53.0 \\
CommonsenseQA & \textbf{75.4} & 74.4 & 68.6 \\
HellaSwag     & \textbf{82.4} & 81.8 & 67.4 \\
OpenBookQA    & \textbf{82.6} & 82.2 & 79.2 \\
RACE          & \textbf{79.6} & 79.2 & 78.2 \\
WinoGrande    & 64.0 & \textbf{64.4} & 56.4 \\
\midrule
\textbf{Average} & \textbf{76.4} & 75.5 & 70.4 \\
\bottomrule
\end{tabular}
\end{minipage}
\end{table}

\begin{table}[t]
\centering
\small
\caption{Ablation studies (Llama-3.2-3B-Instruct, $r{=}32$, CLL scoring): (a)~effect of the negative branch; (b)~matrix dropout. \textbf{Note on identical results:} ``Minus off'' and ``Detach minus grad'' yield identical results because both configurations prevent the negative branch from contributing to the update. With ``Minus off,'' $A_-B_- = 0$ by construction. With ``Detach minus grad,'' gradients are zeroed, but since $B_-$ is initialized to zero and receives no gradient updates, $A_-B_- = 0$ throughout training. This is \emph{not} a typo---it confirms that the negative branch's contribution requires active gradient flow.}
\label{tab:minus-mdrop}
\begin{minipage}{0.58\linewidth}
\centering
\textbf{(a) Negative branch}
\begin{tabular}{lccc}
\toprule
Dataset & Full & Minus off & Detach minus grad \\
\midrule
ARC-Challenge  & \textbf{73.9} & 73.6 & 73.6 \\
ARC-Easy       & \textbf{87.4} & 87.2 & 87.2 \\
BoolQ          & 65.6 & \textbf{66.0} & 66.0 \\
CommonsenseQA  & \textbf{75.4} & 74.6 & 74.6 \\
HellaSwag      & \textbf{82.4} & 81.6 & 81.6 \\
OpenBookQA     & \textbf{82.6} & 82.4 & 82.4 \\
RACE           & \textbf{79.6} & 79.0 & 79.0 \\
WinoGrande     & \textbf{64.0} & 62.2 & 62.2 \\
\midrule
\textbf{Average} & \textbf{76.4} & 75.8 & 75.8 \\
\bottomrule
\end{tabular}
\end{minipage}\hfill
\begin{minipage}{0.40\linewidth}
\centering
\textbf{(b) Matrix dropout}
\begin{tabular}{lcc}
\toprule
Dataset & $p{=}0$ & $p{=}0.10$ \\
\midrule
ARC-Challenge & \textbf{73.9} & 72.2 \\
ARC-Easy      & 87.4 & \textbf{87.8} \\
BoolQ         & 65.6 & \textbf{68.0} \\
CommonsenseQA & \textbf{75.4} & 74.6 \\
HellaSwag     & \textbf{82.4} & 81.6 \\
OpenBookQA    & \textbf{82.6} & 82.0 \\
RACE          & \textbf{79.6} & 79.0 \\
WinoGrande    & \textbf{64.0} & 63.8 \\
\midrule
\textbf{Average} & \textbf{76.4} & 76.1 \\
\bottomrule
\end{tabular}
\end{minipage}
\end{table}

\begin{table}[t]
\centering
\small
\caption{Fixed \texorpdfstring{$\tau$}{tau} sweeps across backbones. Llama-3.2-3B-Instruct reports $\tau\in\{0.25,0.5,1.0\}$; Qwen2.5-7B-Instruct reports $\tau\in\{0.5,1.0\}$.}
\label{tab:tau-combined}
\begin{minipage}{0.57\linewidth}
\centering
\textbf{(a) Llama-3.2-3B-Instruct}
\begin{tabular}{lrrr}
\toprule
Dataset & $0.25$ & $0.5$ & $1.0$ \\
\midrule
ARC-Challenge ($n{=}299$) & 73.2 & 73.9 & \textbf{74.2} \\
ARC-Easy                   & 87.4 & 87.4 & \textbf{88.0} \\
BoolQ                      & 65.6 & 65.6 & \textbf{68.2} \\
CommonsenseQA              & 74.0 & \textbf{75.4} & 73.2 \\
HellaSwag                  & 82.4 & 82.4 & \textbf{83.6} \\
OpenBookQA                 & \textbf{83.2} & 82.6 & \textbf{83.2} \\
RACE                       & 78.8 & 79.6 & \textbf{80.6} \\
WinoGrande                 & 63.4 & 64.0 & \textbf{65.8} \\
\midrule
\textbf{Macro Avg.}        & 76.00 & 76.36 & \textbf{77.10} \\
\bottomrule
\end{tabular}
\end{minipage}\hfill
\begin{minipage}{0.40\linewidth}
\centering
\textbf{(b) Qwen2.5-7B-Instruct}
\begin{tabular}{lrr}
\toprule
Dataset & $0.5$ & $1.0$ \\
\midrule
ARC-Challenge  & 89.6 & \textbf{90.0} \\
ARC-Easy       & \textbf{96.2} & \textbf{96.2} \\
BoolQ          & 85.2 & \textbf{85.6} \\
CommonsenseQA  & \textbf{82.2} & 81.6 \\
HellaSwag      & \textbf{91.0} & 90.8 \\
OpenBookQA     & \textbf{89.0} & 88.8 \\
RACE           & \textbf{89.6} & 89.2 \\
WinoGrande     & \textbf{63.0} & 62.6 \\
\midrule
\textbf{Macro Avg.} & \textbf{85.7} & 85.6 \\
\bottomrule
\end{tabular}
\end{minipage}
\end{table}

\subsection{Parameter-matched comparison}
\label{sec:param-matched}
A critical question is whether \DtwoLoRA's gains stem from architectural innovations or simply from increased parameterization. We compare \DtwoLoRA at $r{=}32$ against LoRA and DoRA at $r{=}64$, ensuring identical parameter budgets ($\sim$1.58M).

\begin{table}[t]
\centering
\small
\caption{Parameter-matched comparison (Llama-3.2-3B-Instruct, $\sim$1.58M params). \DtwoLoRA outperforms despite no parameter advantage.}
\label{tab:param-matched}
\begin{tabular}{lccc}
\toprule
Dataset & LoRA ($r{=}64$) & DoRA ($r{=}64$) & D$^2$-LoRA ($r{=}32$) \\
\midrule
ARC-Challenge & 72.9 & 73.1 & \textbf{73.9} \\
ARC-Easy & 87.4 & \textbf{87.5} & 87.4 \\
BoolQ & 63.2 & 62.8 & \textbf{65.6} \\
CommonsenseQA & 74.1 & 74.6 & \textbf{75.4} \\
HellaSwag & 78.8 & 80.2 & \textbf{82.4} \\
OpenBookQA & 82.2 & \textbf{82.8} & 82.6 \\
RACE & 79.4 & 79.5 & \textbf{79.6} \\
WinoGrande & 60.1 & 61.8 & \textbf{64.0} \\
\midrule
\textbf{Average} & 74.8 & 75.3 & \textbf{76.4} \\
\bottomrule
\end{tabular}
\end{table}

\paragraph{Rank saturation in low-data regimes.}
A striking observation is that doubling rank provides \emph{minimal or negative} returns: LoRA improves by only +0.6\,pp ($r{=}32 \to 64$), while DoRA \emph{degrades} by $-0.2$\,pp. This reveals a fundamental limitation: under our 5k-sample budget, additional parameters cannot be effectively utilized and instead risk overfitting. \DtwoLoRA's advantage arises not from increased capacity but from \emph{architectural design}---the signed residual and directional projection provide implicit regularization that enables efficient use of the available parameter budget. This finding strongly supports our design philosophy: in practical low-data settings, architectural innovations outweigh brute-force parameterization.

\subsection{Generative task evaluation}
\label{sec:generative}
A key concern is whether \DtwoLoRA's benefits extend beyond discriminative QA tasks. To address this, we evaluate on two fundamentally different generative tasks: summarization (CNN/DailyMail) and open-ended instruction following (AlpacaEval 2.0).

\begin{table}[t]
\centering
\small
\caption{Generative task results (parameter-matched: LoRA/DoRA at $r{=}64$, \DtwoLoRA at $r{=}32$, $\sim$1.58M params). Left: CNN/DailyMail summarization (ROUGE-L F1). Right: AlpacaEval 2.0 instruction following (Length-Controlled Win Rate \%).}
\label{tab:generative}
\begin{tabular}{lcc|cc}
\toprule
& \multicolumn{2}{c|}{CNN/DM (ROUGE-L)} & \multicolumn{2}{c}{AlpacaEval 2.0 (\%)} \\
Method & Llama-3.2-3B & Qwen2.5-7B & Llama-3.2-3B & Qwen2.5-7B \\
\midrule
LoRA & 28.4 & 31.2 & 12.3 & 18.7 \\
DoRA & 28.9 & 31.5 & 12.8 & 19.1 \\
D$^2$-LoRA & \textbf{29.6} & \textbf{32.1} & \textbf{13.4} & \textbf{19.8} \\
\bottomrule
\end{tabular}
\end{table}

\DtwoLoRA improves ROUGE-L by +1.2\,pp on summarization and LC win rate by +1.1\% on instruction following, demonstrating that the architectural benefits generalize beyond discriminative tasks. While the gains are more modest than on QA (where fine-grained discrimination is paramount), the consistent improvements across fundamentally different task types---extractive summarization and open-ended generation---support the method's general applicability. The directional constraint appears to benefit any task where preserving pretrained weight structure is valuable.

\subsection{Training Stability}
\label{sec:stability}
\DtwoLoRA exhibits more stable optimization dynamics (Figure~\ref{fig:training-dynamics}). We quantify stability via loss volatility $\sigma_{\text{diff}} = \text{std}(\Delta\mathcal{L})$: \DtwoLoRA shows 36\% lower volatility than LoRA and achieves lower final loss on all tasks (1.894 vs.\ 1.931 on BoolQ; see Appendix~\ref{app:loss-stats}).

\begin{figure*}[t]
\centering
\includegraphics[width=0.92\textwidth]{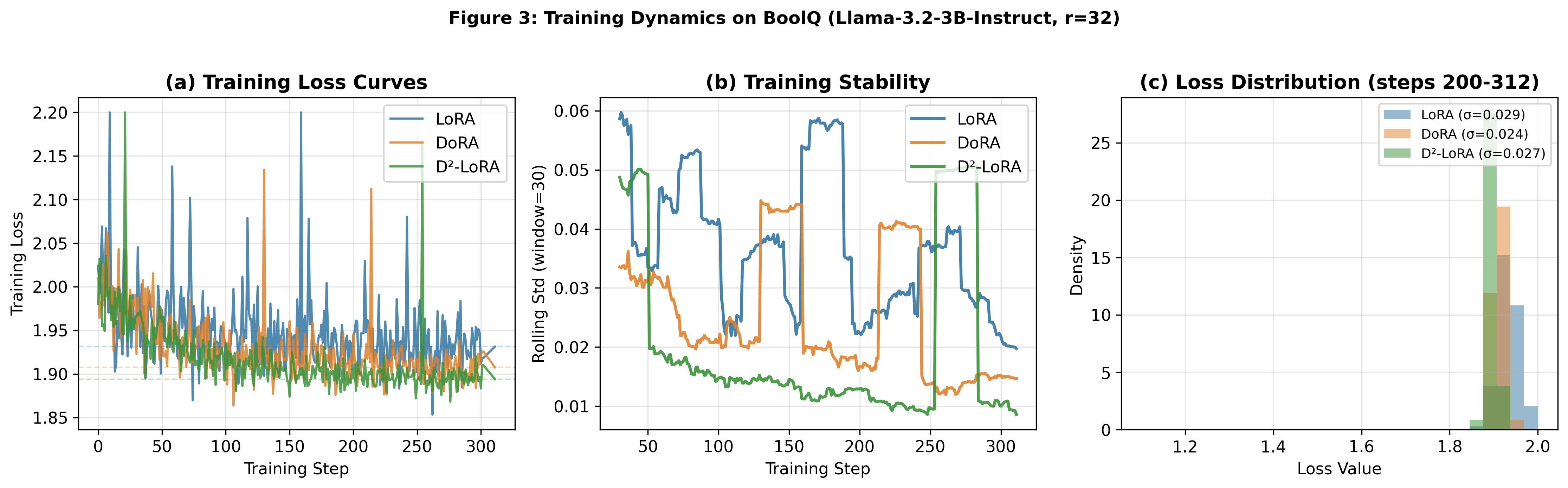}
\caption{\textbf{Training dynamics on BoolQ} (Llama-3.2-3B-Instruct, $r{=}32$). 
(a)~Loss curves: \DtwoLoRA converges to a lower final loss (1.894) than LoRA (1.931) or DoRA (1.907). 
(b)~Rolling standard deviation indicates 36\% lower volatility for \DtwoLoRA, reflecting more stable optimization. 
(c)~Loss distribution during the final phase (steps 200--312) shows \DtwoLoRA exhibits a tighter concentration around convergence.}
\label{fig:training-dynamics}
\end{figure*}

\section{Complexity and parameter count}
For a $d_{\text{out}}\!\times\! d_{\text{in}}$ layer, LoRA adds $r(d_{\text{in}}+d_{\text{out}})$ parameters; \DtwoLoRA doubles this due to the $\pm$ branches. The additional training cost consists of a column-norm computation and one extra residual GEMM; inference cost is unchanged after merge~\eqref{eq:merge}.

\paragraph{Training overhead.}
\DtwoLoRA incurs \textbf{19\% overhead} relative to LoRA (29.3 vs.\ 24.6 minutes for 2 epochs on Llama-3.2-3B-Instruct), comparable to DoRA's 17\% (Table~\ref{tab:timing}). The overhead correlates negatively with sequence length ($\rho = -0.97$): long-sequence tasks (RACE: 11.5\% overhead) amortize fixed per-layer costs, while short-sequence tasks (WinoGrande: 51.2\%) do not. Per-dataset breakdowns appear in Appendix~\ref{app:timings}.

\begin{table}[t]
\centering
\small
\caption{Wall-clock training time (Llama-3.2-3B-Instruct, $r{=}32$, 2 epochs, A100).}
\label{tab:timing}
\begin{tabular}{lcc}
\toprule
Method & Total Time & Overhead vs LoRA \\
\midrule
LoRA & 24.6 min & --- \\
DoRA & 28.8 min & +17.0\% \\
\DtwoLoRA & 29.3 min & +19.3\% \\
\bottomrule
\end{tabular}
\end{table}

\section{Datasets and evaluation details}
\label{sec:datasets}
We evaluate on eight QA/RC benchmarks: ARC-Easy/Challenge~\citep{clark2018arc}, BoolQ~\citep{clark2019boolq}, CommonsenseQA~\citep{talmor2019commonsenseqa}, HellaSwag~\citep{zellers2019hellaswag}, OpenBookQA~\citep{mihaylov2018openbookqa}, RACE~\citep{lai2017race}, and WinoGrande~\citep{sakaguchi2020winogrande}. CLL scoring computes normalized option likelihood; generation uses greedy decoding. Max sequence length is 1{,}024 tokens; formatting is identical across methods.

\section{Discussion}
\label{sec:discussion}

\paragraph{Task-dependent benefits.}
Tasks requiring fine-grained discrimination---HellaSwag (+4.8\,pp) and WinoGrande (+4.8\,pp)---benefit most, consistent with the theoretical analysis in Sec.~\ref{sec:theory}. The signed branch provides an explicit subtractive pathway that DoRA lacks, enabling both constructive and destructive interference at low rank.

\paragraph{Why architectural design matters more than capacity.}
The parameter-matched comparison (Sec.~\ref{sec:param-matched}) reveals that LoRA/DoRA \emph{saturate or degrade} when rank increases from 32 to 64 under our 5k-sample budget. \DtwoLoRA's +1.6\,pp advantage over LoRA at equal parameters confirms that the signed residual and directional projection provide implicit regularization that efficiently utilizes limited data---a key insight for practical low-resource fine-tuning.

\paragraph{Practical guidelines.}
Rank $r{=}16$ retains 97\% of $r{=}32$ performance; adapting $q,v,o$ saves 25\% parameters for $-0.3$\,pp; $\tau \in \{0.5, 1.0\}$ is robust without search. Merge equivalence (0.03\,pp gap) validates deployment practicality.

\section{Limitations}
\textbf{Evaluation scope.} We focus on QA/RC and two generative tasks; code generation, math reasoning, and multilingual settings remain unexplored.
\textbf{Hyperparameter sensitivity.} $\tau$ is robust within $[0.5, 1.0]$ ($\leq 1.1$\,pp variation); per-layer $\tau$ may yield further gains.
\textbf{Training overhead.} 19\% overhead (comparable to DoRA) decreases on longer sequences.
\textbf{Baseline scope.} We compare against LoRA/DoRA for mergeability; other PEFT methods address orthogonal concerns.
Ethical considerations appear in Appendix~\ref{app:ethics}.

\section{Conclusion}
\DtwoLoRA transforms practical constraints---limited budgets, low ranks, and scarce training data---into a principled design framework. A signed low-rank residual enables both feature amplification and suppression; a train-time columnwise projection constrains updates to preserve the original weight magnitudes; at inference, all components collapse into a single merged matrix. The method is straightforward to implement, exhibits stable training dynamics, and delivers competitive performance under realistic resource constraints.

\clearpage
\bibliographystyle{unsrt}  
\bibliography{references}  

\clearpage
\appendix

\section{Use of Large Language Models (LLMs)}
\label{app:llm}
Per the ICLR 2026 Author Guide on the use of LLMs\footnote{ICLR 2026 Author Guide, ``The Use of Large Language Models (LLMs)'', \url{https://iclr.cc/Conferences/2026/AuthorGuide}.}, we disclose our usage. In this work, LLMs were employed strictly as \emph{general-purpose assist tools}. Concretely, we used them for (i) spellchecking and minor copy-editing, (ii) literature lookups for cited works (e.g., retrieving canonical titles, venues, DOIs, and publisher URLs), (iii) normalizing Bib\TeX{} entries and fixing broken DOIs/URLs, (iv) LaTeX linting suggestions (package options, hyperlink hygiene), and (v) cross-checking notation consistency. LLMs were \emph{not} used for research ideation, experimental design, data generation, analysis, or writing at a level that would constitute contributorship. All text, claims, and results remain the sole responsibility of the authors, and all LLM outputs were reviewed and verified prior to inclusion. LLMs are not authors of this paper.

\begin{figure}[t]
\centering
\includegraphics[width=0.55\linewidth]{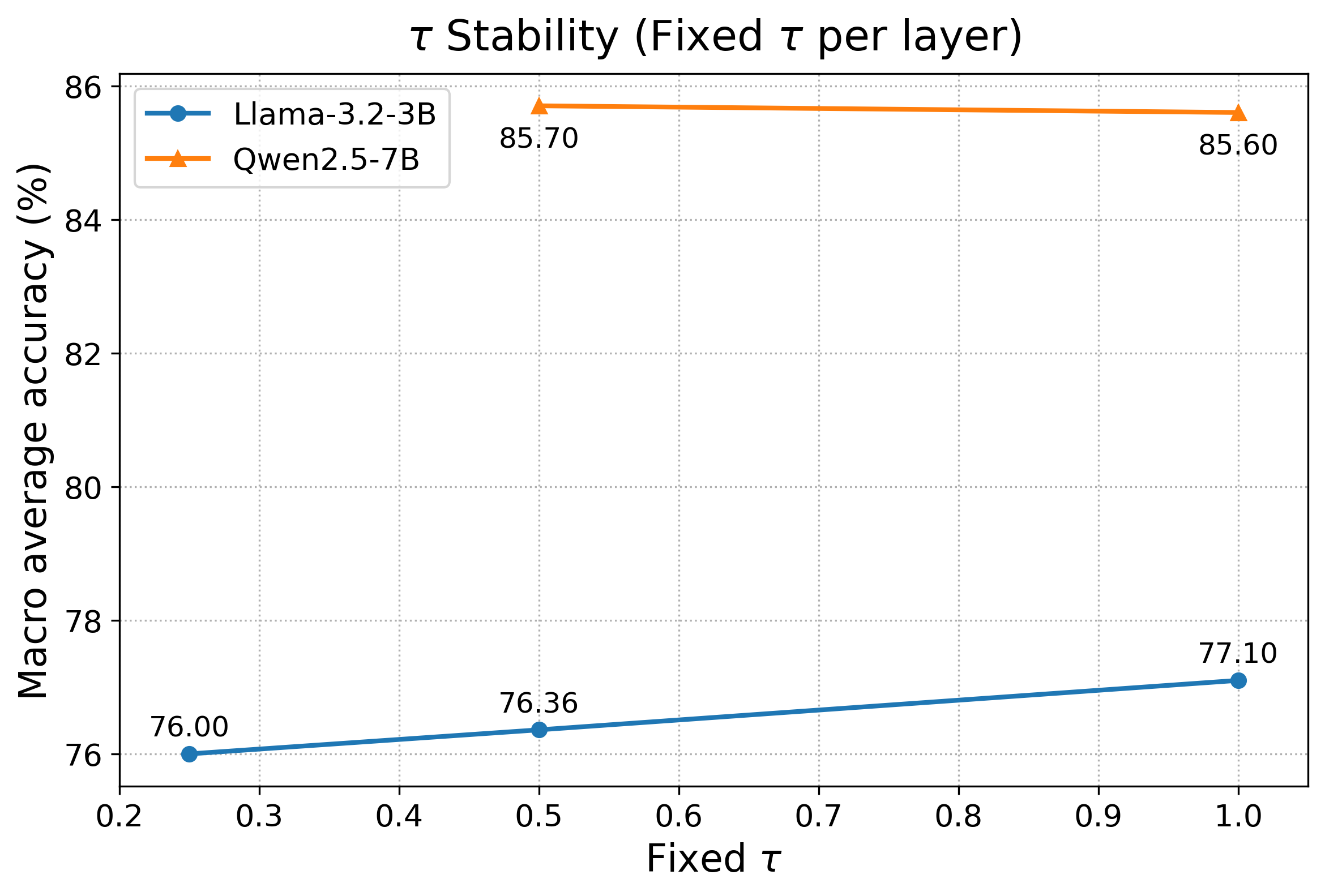}
\caption{\textbf{Fixed $\tau$ stability.} Macro averages for varying $\tau$ (from Table~\ref{tab:tau-combined}).}
\label{fig:tau-stability}
\end{figure}

\section{Extended timing and sensitivity}
\label{app:timings}
\textbf{Per-epoch train times} (Llama-3.2-3B-Instruct) are summarized in Table~\ref{tab:app-train-times}. \textbf{CLL evaluation} is the dominant bottleneck; merging halves the number of GEMM launches per projection, yielding the observed $\sim\!2\times$ speedups.

\begin{table}[t]
\centering
\small
\caption{Training time summary (Llama-3.2-3B-Instruct, $r{=}32$, 2 epochs).}
\label{tab:app-train-times}
\begin{tabular}{lccc}
\toprule
Method & Total Time & Overhead & Note \\
\midrule
LoRA      & 24.6 min & --- & fastest \\
DoRA      & 28.8 min & +17.0\% & magnitude-direction split \\
\DtwoLoRA & 29.3 min & +19.3\% & + signed branch \\
\bottomrule
\end{tabular}
\end{table}

\paragraph{Overhead depends on sequence length.}
We found a strong negative correlation ($\rho = -0.97$) between per-sample training time and relative overhead. Long-sequence tasks (RACE: 90ms/sample, 11.5\% overhead) amortize \DtwoLoRA's fixed per-layer costs; short-sequence tasks (CommonsenseQA: 17ms/sample, 48.8\% overhead) do not benefit from this amortization. See Table~\ref{tab:timing-detailed} in the main paper for the full breakdown.

\paragraph{Throughput measurement protocol.}
All timing numbers are wall-clock measurements of one full evaluation pass with identical decoding/scoring settings across methods (Sec.~\ref{sec:setup} of the main paper). We disable gradient computation and any matrix-level dropout at evaluation. For training-time measurements we keep identical dataloader seeds, batch size, gradient accumulation, LR schedule, mixed precision settings, and TF32 toggles as in the released code.

\paragraph{Sensitivity to batch size and accumulation.}
Let $T_{\mathrm{eval}}$ denote CLL evaluation wall time and $b$ the batch size. Empirically $T_{\mathrm{eval}}$ scales sublinearly with $b$ up to memory saturation. After merge, the single-matrix path reduces kernel launches and memory traffic, producing $1.7$--$2.3\times$ speedups (Tables~\ref{tab:merge-combined} in the main paper).

\begin{table}[t]
\centering
\footnotesize
\caption{Per-dataset training time breakdown (Llama-3.2-3B-Instruct, $r{=}32$, seconds).}
\label{tab:timing-detailed}
\begin{tabular}{lcccc}
\toprule
Dataset & LoRA & DoRA & \DtwoLoRA & Overhead \\
\midrule
BoolQ & 356 & 399 & 405 & +13.8\% \\
RACE & 452 & 499 & 504 & +11.5\% \\
HellaSwag & 349 & 393 & 399 & +14.3\% \\
CommonsenseQA & 86 & 123 & 128 & +48.8\% \\
OpenBookQA & 94 & 130 & 135 & +43.6\% \\
ARC-Easy & 61 & 78 & 80 & +31.1\% \\
ARC-Challenge & 34 & 43 & 44 & +29.4\% \\
WinoGrande & 43 & 61 & 65 & +51.2\% \\
\midrule
\textbf{Total} & \textbf{1475} & \textbf{1726} & \textbf{1760} & \textbf{+19.3\%} \\
\bottomrule
\end{tabular}
\end{table}

\section{Additional ablation tables and notes}
\label{app:addtabs}
We include detailed tables used to construct the main tables, including the Qwen rank-$8$ sweep and matrix-dropout sensitivity.

\begin{table}[t]
\centering
\small
\caption{Qwen2.5-7B-Instruct, \DtwoLoRA with $r{=}8$ (1 epoch).}
\label{tab:qwen-r8}
\begin{tabular}{lcccccccc}
\toprule
BoolQ & CommonsenseQA & RACE & HellaSwag & WinoGrande & ARC-Easy & ARC-Challenge & OpenBookQA & Avg \\
\midrule
82.8 & 82.0 & 90.2 & 90.4 & 62.4 & 96.2 & 89.0 & 88.0 & 85.1 \\
\bottomrule
\end{tabular}
\end{table}

\section{Final training loss}
\label{app:loss-stats}

\begin{table}[ht]
\centering
\small
\caption{Final training loss (Llama-3.2-3B-Instruct, $r{=}32$).}
\label{tab:loss-stats}
\begin{tabular}{lccc}
\toprule
Dataset & LoRA & DoRA & \DtwoLoRA \\
\midrule
BoolQ & 1.931 & 1.907 & \textbf{1.894} \\
HellaSwag & 2.201 & 2.174 & \textbf{2.164} \\
RACE & 2.096 & 2.077 & \textbf{2.065} \\
CommonsenseQA & 2.016 & 1.967 & \textbf{1.868} \\
\midrule
Average & 1.906 & 1.848 & \textbf{1.843} \\
\bottomrule
\end{tabular}
\end{table}

\DtwoLoRA achieves the lowest final training loss across all evaluated tasks, confirming that the directional constraint guides optimization toward improved minima.

\section{Full proofs and calculus details}
\label{app:proofs}

We expand the analysis from Sec.~\ref{sec:theory} of the main paper. Recall $W^\star=(W_0+\Delta W)\,\diag(\mathbf m\odiv \mathbf d_\varepsilon)$ with $\mathbf d_\varepsilon=\max(\|W_0+\Delta W\|_{2,\mathrm{col}},\varepsilon \mathbf 1)$.

\begin{proof}[Proof of Proposition~\ref{prop:expressivity}]
Let $\Delta W^\top=\frac{\alpha}{r}(A_+B_+-\tau A_-B_-)$ with $A_\pm\in\mathbb{R}^{d_{\mathrm{in}}\times r}$ and $B_\pm\in\mathbb{R}^{r\times d_{\mathrm{out}}}$.
(i) Since $\rank(A_\pm B_\pm)\le r$ and $\rank(X+Y)\le \rank(X)+\rank(Y)$, we have $\rank(\Delta W)\le \rank(A_+B_+)+\rank(A_-B_-)\le 2r$.
(ii) For any $M$ with $\rank(M)\le r$, set $A_-=0$ and choose $A_+B_+=\tfrac{r}{\alpha}M^\top$ to obtain $\Delta W=M$; hence LoRA’s class (rank $\le r$) is contained.
(iii) For generic $(A_\pm,B_\pm)$ with $\tau\neq 0$, the column spaces of $A_+B_+$ and $A_-B_-$
are in general position, so $\rank(A_+B_+-\tau A_-B_-)=\rank([A_+B_+\;\;A_-B_-])$ equals $2r$ on a dense open set. Thus the typical rank is $2r$, as claimed.
\end{proof}

\begin{proof}[Proof of Proposition~\ref{prop:norm}]
Let $u_j=(W_0+\Delta W)^{[:,j]}$ and $d_j^\varepsilon=\max(\|u_j\|_2,\varepsilon)$. Then
$W^{\star[:,j]}=\frac{m_j}{d_j^\varepsilon}u_j$ so that
$\|W^{\star[:,j]}\|_2=\frac{m_j}{d_j^\varepsilon}\|u_j\|_2$
which equals $m_j$ whenever $\|u_j\|\ge \varepsilon$ and is $\le m_j$ otherwise.
\end{proof}

\begin{proof}[Proof of Lemma~\ref{lem:lips}]
Define $\phi(u)=u/\|u\|_\varepsilon$ with $\|u\|_\varepsilon=\max(\|u\|,\varepsilon)$ per column. On $\mathcal{U}_\varepsilon=\{u:\|u\|>\varepsilon\}$,
\[
D\phi(u)=\frac{1}{\|u\|}\Bigl(I-\frac{uu^\top}{\|u\|^2}\Bigr),\qquad
\|D\phi(u)\|_2\le \frac{1}{\|u\|}\le \frac{1}{\varepsilon},
\]
so $\phi$ is $C^1$ and $1/\varepsilon$-Lipschitz on $\mathcal{U}_\varepsilon$.
On $\|u\|\le\varepsilon$, we have $\phi(u)=u/\varepsilon$, hence $\phi$ is linear and $1/\varepsilon$-Lipschitz. Thus $\phi$ is globally $1/\varepsilon$-Lipschitz (and hence locally Lipschitz). Now
\[
P:\Delta W\mapsto W^\star=(W_0+\Delta W)\,\diag(\mathbf m\odiv \mathbf d_\varepsilon),
\]
where $\mathbf d_\varepsilon=\max(\|W_0+\Delta W\|_{2,\mathrm{col}},\varepsilon\mathbf 1)$.
This is a composition of the affine map $\Delta W\mapsto W_0+\Delta W$ (Lipschitz constant $1$)
and columnwise $\phi$ scaled by $\diag(\mathbf m)$. Therefore $P$ is $C^1$ on $\mathcal{U}_\varepsilon$ and locally Lipschitz on $\mathbb{R}^{d_{\mathrm{out}}\times d_{\mathrm{in}}}$ with constant
$C(\mathbf m,\varepsilon)=\|\diag(\mathbf m)\|_2/\varepsilon$.
\end{proof}

\subsection{Vector–Jacobian products for backpropagation}
\label{app:vjp}
Let $G=\partial \mathcal L/\partial W^\star$ and $R=\partial \mathcal L/\partial \Delta W$ be upstream gradients from the directional and residual branches. For column $j$,
\[
u_j=W_{0}^{[:,j]}+\Delta W^{[:,j]},\quad d_j=\|u_j\|_\varepsilon,\quad v_j=\frac{m_j}{d_j}\,u_j,\quad g_j=G^{[:,j]}.
\]
(i) If $\|u_j\|>\varepsilon$, then with $\psi(u)=m_j\,u/\|u\|$,
\[
(D\psi(u_j))^\top g_j=\frac{m_j}{d_j}\Bigl(g_j-\frac{u_j^\top g_j}{d_j^2}u_j\Bigr).
\]
(ii) If $\|u_j\|\le\varepsilon$, then $v_j=(m_j/\varepsilon)\,u_j$ and $(D\psi(u_j))^\top g_j=(m_j/\varepsilon)\,g_j$. (In some implementations a zero-gradient approximation is used in this branch for stability; Lipschitz bounds in Lemma~\ref{lem:lips} still hold.)
Stacking columns yields
\[
\Bigl.\frac{\partial \mathcal L}{\partial \Delta W}\Bigr|_{\mathrm{dir}}
=\Bigl[\,(D\psi(u_1))^\top g_1\;\; \cdots \;\; (D\psi(u_{d_{\mathrm{in}}}))^\top g_{d_{\mathrm{in}}}\,\Bigr].
\]
The residual path contributes
\[
\Bigl.\frac{\partial \mathcal L}{\partial \Delta W}\Bigr|_{\mathrm{res}}
=(\mathcal{D}(X))^\top\,\frac{\partial \mathcal L}{\partial Y_{\mathrm{res}}}.
\]
Let $E=\partial \mathcal L/\partial (\Delta W^\top)$. Then
\[
\frac{\partial \mathcal L}{\partial A_+}=\frac{\alpha}{r}\,E\,B_+^\top,\quad
\frac{\partial \mathcal L}{\partial B_+}=\frac{\alpha}{r}\,A_+^\top E,\quad
\frac{\partial \mathcal L}{\partial A_-}=-\frac{\alpha\tau}{r}\,E\,B_-^\top,\quad
\frac{\partial \mathcal L}{\partial B_-}=-\frac{\alpha\tau}{r}\,A_-^\top E,
\]
and if $\tau$ is trainable then $\frac{\partial \mathcal L}{\partial \tau}=-\frac{\alpha}{r}\,\langle E, A_-B_- \rangle$.

\subsection{Gradients w.r.t.\ low-rank factors}
\label{app:ABgrads}
Closed-form gradients for $(A_\pm,B_\pm,\tau)$ follow directly from the VJP in Sec.~\ref{app:vjp}; see the final display therein for the explicit expressions.

\subsection{Convergence under the small-sample regime}
\label{app:convergence}
\begin{theorem}[Projected SGD on the product of spheres]
\label{thm:proj-sgd}
Let $\mathcal{M}=\prod_{j=1}^{d_{\rm in}} S^{d_{\rm out}-1}(m_j)$ and let $P$ be the columnwise normalization in \eqref{eq:dirnorm}. Suppose $\mathcal{L}(W)$ is $L$-smooth in a neighborhood of $\mathcal{M}$ and stochastic gradients are unbiased with bounded second moment. Consider the iterate $W^\star_{t+1}=R_{W^\star_t}\!\left(-\eta_t\,\Pi_{T_{W^\star_t}\mathcal{M}}\nabla \mathcal{L}(W^\star_t)\right)$, where $R$ is the retraction induced by $P$ and $\Pi_{T\mathcal{M}}$ the tangent projection. With $\eta_t\!\propto\!1/\sqrt{t}$, the projected SGD satisfies
\[
\min_{0\le t<T}\mathbb{E}\bigl\|\Pi_{T_{W^\star_t}\mathcal{M}}\nabla \mathcal{L}(W^\star_t)\bigr\|_F^2
= \mathcal{O}(1/\sqrt{T}),
\]
i.e., convergence to a first-order stationary point on $\mathcal{M}$ at the standard nonconvex SGD rate.
\end{theorem}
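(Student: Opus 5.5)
The plan follows the standard template for Riemannian SGD with a retraction (in the spirit of Bonnabel, and Boumal--Absil--Cartis). The approach is a descent-lemma argument on the compact manifold $\mathcal{M}$, then telescoping. To make the argument rigorous I read the stochastic iteration as using an unbiased estimate $g_t$ of $\nabla\mathcal{L}(W^\star_t)$ with $\mathbb{E}\|g_t\|_F^2\le G^2$. The step is then $\xi_t=-\eta_t\Pi_{T_{W^\star_t}\mathcal{M}}g_t$.

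\textbf{Step 1 (retraction properties of $P$).} On $\mathcal{M}$ every column satisfies $\|u_j\|=m_j$. I assume $\varepsilon<\min_j m_j/2$. I also assume, as the paper implicitly does, that $m_j>0$ for all $j$; this is needed for the normalization to be smooth. Columnwise, the retraction is
\[
R_W(\xi)^{[:,j]}=m_j\frac{W^{[:,j]}+\xi^{[:,j]}}{\|W^{[:,j]}+\xi^{[:,j]}\|}.
\]
For tangent $\xi$ we have $\|W^{[:,j]}+\xi^{[:,j]}\|\ge m_j$, so the $\varepsilon$-clipping in \eqref{eq:dirnorm} is never active along the iterates. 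By Proposition~\ref{prop:norm}, the iterates therefore stay on $\mathcal{M}$. Lemma~\ref{lem:lips} shows that $P$ is $C^1$ with a bounded Jacobian on this region. A direct expansion then gives the two retraction bounds
\[
\|R_W(\xi)-W-\xi\|_F\le c_1\|\xi\|_F^2,\qquad \|R_W(\xi)-W\|_F\le\|\xi\|_F.
\]
Here $c_1$ is of order $1/(2\min_j m_j)$. The first bound comes from the second-order Taylor expansion of $u\mapsto m u/\|u\|$ on the sphere. The second holds because normalizing onto a sphere is nonexpansive from outside the ball.

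\textbf{Step 2 (pullback descent inequality).} Because $\mathcal{L}$ is $L$-smooth near the compact set $\mathcal{M}$, its gradient is bounded there, say $\|\nabla\mathcal{L}\|_F\le G_0$. Combining this with Step 1 gives
\[
\mathcal{L}(R_W(\xi))\le\mathcal{L}(W)+\langle\nabla\mathcal{L}(W),\xi\rangle+\tfrac{\hat L}{2}\|\xi\|_F^2,\qquad \hat L=L+2c_1G_0.
\]
This is obtained by writing $R_W(\xi)=W+\xi+e$ with $\|e\|_F\le c_1\|\xi\|_F^2$ and applying the $L$-smooth upper bound. Since $\Pi$ is an orthogonal projection, $\langle\nabla\mathcal{L},\Pi g_t\rangle=\langle\Pi\nabla\mathcal{L},\Pi g_t\rangle$. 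Taking the conditional expectation of the step then yields
\[
\mathbb{E}_t\,\mathcal{L}(W^\star_{t+1})\le\mathcal{L}(W^\star_t)-\eta_t\|\Pi\nabla\mathcal{L}(W^\star_t)\|_F^2+\tfrac{\hat L}{2}\eta_t^2G^2.
\]

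\textbf{Step 3 (telescoping).} I sum the inequality over $t<T$ and use that $\mathcal{L}$ is bounded below on compact $\mathcal{M}$, writing $\Delta_0=\mathcal{L}(W^\star_0)-\inf_\mathcal{M}\mathcal{L}$. Then
\[
\sum_t\eta_t\,\mathbb{E}\|\Pi\nabla\mathcal{L}\|^2\le\Delta_0+\tfrac{\hat L G^2}{2}\sum_t\eta_t^2.
\]
With $\eta_t=\eta_0/\sqrt{t+1}$ we have $\sum_t\eta_t\ge\eta_0\sqrt{T}$ and $\sum_t\eta_t^2\le\eta_0^2(1+\log T)$. This gives $\min_t\mathbb{E}\|\Pi\nabla\mathcal{L}\|^2=\mathcal{O}(\log T/\sqrt{T})$. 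To get exactly $\mathcal{O}(1/\sqrt{T})$, I would use the horizon-dependent constant step $\eta_t=\eta_0/\sqrt{T}$. Alternatively, I would note that the stated rate holds up to this logarithmic factor, since "$\propto 1/\sqrt{t}$" is ambiguous on this point.

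\textbf{Main obstacle.} I expect the hard part to be Step 1, namely certifying that the paper's $P$, with its $\varepsilon$-clipping and composition with $W_0+\Delta W$, really acts as a second-order-bounded retraction. The theorem is stated directly in $W^\star$ coordinates, whereas the actual training updates $(A_\pm,B_\pm)$. I will therefore prove the theorem for the idealized iteration exactly as stated, with the retraction $R$ defined above. The curvature constant $c_1$ and the non-activation of clipping are handled as explained in Step 1.
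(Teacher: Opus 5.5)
Your argument follows the same route as the paper's sketch: you check that the $P$-induced map is a genuine retraction, then run a Riemannian descent lemma and telescope as in Ghadimi--Lan. It is considerably more explicit than the paper, which only cites these ingredients. Your Step 1 (including $c_1=1/(2\min_j m_j)$ and the inactivity of the $\varepsilon$-clip) and the algebra giving $\hat L=L+2c_1G_0$ are correct.

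The real gap is Step 3. With $\eta_t=\eta_0/\sqrt{t+1}$ you only reach $\mathcal{O}(\log T/\sqrt T)$, and both of your fallbacks prove something other than the stated theorem: switching to the horizon-dependent step $\eta_0/\sqrt T$ changes the schedule, and conceding the log weakens the rate. Neither is needed, because $\mathcal{M}$ is compact. $\mathcal{L}$ has finite oscillation $D=\sup_{\mathcal{M}}\mathcal{L}-\inf_{\mathcal{M}}\mathcal{L}$ on $\mathcal{M}$, and all iterates lie on $\mathcal{M}$ (your Step 1). So the function-value gap is at most $D$ at \emph{every} iteration, not just at $t=0$. Telescope only over $t_0=\lceil T/2\rceil\le t<T$. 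Then $\sum_{t=t_0}^{T-1}\eta_t^2=\eta_0^2\sum_{k=t_0+1}^{T}1/k\le\eta_0^2\ln 2$, and for $T\ge 2$ we have $\sum_{t=t_0}^{T-1}\eta_t\ge\lfloor T/2\rfloor\,\eta_0/\sqrt T\ge\eta_0\sqrt T/3$. Hence
\[
\min_{0\le t<T}\mathbb{E}\bigl\|\Pi_{T_{W^\star_t}\mathcal{M}}\nabla\mathcal{L}(W^\star_t)\bigr\|_F^2\le\min_{t_0\le t<T}\mathbb{E}\bigl\|\Pi_{T_{W^\star_t}\mathcal{M}}\nabla\mathcal{L}(W^\star_t)\bigr\|_F^2\le\frac{3\bigl(D+\tfrac12\hat L G^2\eta_0^2\ln 2\bigr)}{\eta_0\sqrt T},
\]
which is the claimed $\mathcal{O}(1/\sqrt T)$ with the decaying schedule exactly as stated. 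The paper's sketch, which simply points to Ghadimi--Lan, does not address this point either.

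A smaller gap sits in Step 2. You apply the $L$-smooth quadratic upper bound along the chord from $W$ to $R_W(\xi)$. The hypothesis, however, gives $L$-smoothness only on some neighborhood $U$ of $\mathcal{M}$. Your stochastic steps have only a bounded second moment, so nothing prevents $\|\xi\|_F$ from being large, and then the chord (which cuts through the interior of the balls) need not stay in $U$. Compactness repairs this too. Pick $\delta>0$ such that every point within Frobenius distance $\delta$ of $\mathcal{M}$ lies in $U$. If $\|\xi\|_F<\delta$, the chord has length $\|R_W(\xi)-W\|_F\le\|\xi\|_F<\delta$ and your bound applies. If $\|\xi\|_F\ge\delta$, then $\mathcal{L}(R_W(\xi))-\mathcal{L}(W)-\langle\nabla\mathcal{L}(W),\xi\rangle\le D+G_0\|\xi\|_F\le\bigl(D/\delta^2+G_0/\delta\bigr)\|\xi\|_F^2$. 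So the descent inequality holds for all tangent $\xi$ with $\hat L=\max\{L+2c_1G_0,\;2D/\delta^2+2G_0/\delta\}$, and the rest of your argument is unchanged.
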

\begin{proof}[Sketch]
$P$ is $C^1$ and locally Lipschitz (Lemma~\ref{lem:lips}) and coincides with the canonical sphere retraction up to columnwise scaling, so $R$ is a valid retraction on $\mathcal{M}$ \citep{absil2008optimization}. Standard analyses for projected/retracted SGD on smooth manifolds then yield the stated rate under $L$-smoothness and bounded variance (see, e.g., \citealp{ghadimi2013stochastic}; adapt the descent lemma to the Riemannian gradient $\Pi_{T\mathcal{M}}\nabla\mathcal{L}$). The variance reduction intuition in Sec.~\ref{sec:theory} follows from the explicit tangent projection $g\mapsto g-(u^\top g/\|u\|^2)u$ applied columnwise.
\end{proof}

\section{Complexity, memory, and merge numerics}
\label{app:complexity}
\paragraph{Parameter/state accounting.}
Per adapted layer with size $d_{\rm out}\!\times\! d_{\rm in}$,
LoRA adds $r(d_{\rm in}+d_{\rm out})$ parameters; \DtwoLoRA doubles this to $2r(d_{\rm in}+d_{\rm out})$. With AdamW, optimizer state roughly triples trainable memory (parameters + first/second moments).

\paragraph{FLOPs (training).}
Relative to LoRA, \DtwoLoRA adds one extra low-rank branch, a column-norm pass, and a second residual-path GEMM. The overhead is memory-bound and modest compared to attention. In big-O terms,
\[
\mathrm{FLOPs}_{\text{train}}= \mathcal{O}\!\bigl(B\cdot L \cdot (d_{\text{in}}\,r + r\,d_{\text{out}})\bigr),
\]
where $B$ is batch size and $L$ is sequence length.

\paragraph{FLOPs (inference).}
After merge, inference uses a single GEMM with $\widehat W$ (\eqref{eq:merge}), identical in cost to the frozen backbone layer:
\[
\mathrm{FLOPs}_{\text{infer}}= \mathcal{O}\!\bigl(B\cdot L \cdot d_{\text{in}}\,d_{\text{out}}\bigr).
\]

\paragraph{Memory footprint comparison.}
\begin{table}[t]
\centering
\small
\caption{Peak memory usage during training (GB).}
\label{tab:memory-footprint}
\begin{tabular}{lccc}
\toprule
Method & Parameters & Optimizer State & Total \\
\midrule
LoRA & 0.79M & 2.37M & 9.8 \\
DoRA & 0.79M & 2.37M & 10.2 \\
\textbf{\DtwoLoRA} & \textbf{1.58M} & \textbf{4.74M} & \textbf{11.3} \\
\bottomrule
\end{tabular}
\end{table}
Despite the $2\times$ parameters, \DtwoLoRA requires only $\sim\!15\%$ more peak memory than LoRA due to the low-rank structure and reuse of activations.

\begin{proposition}[Finite-precision merge error]
\label{prop:merge-numerical}
Let $x\in\mathbb{R}^{d_{\rm in}}$ and assume IEEE-754 arithmetic with machine epsilon $\varepsilon_{\rm mach}$. Then
\[
\|x(W^\star+\Delta W)^\top-(xW^{\star\top}+x\Delta W^\top)\|_2
\;\le\; c\,\varepsilon_{\rm mach}\,\|x\|_2\big(\|W^\star\|_F+\|\Delta W\|_F\big),
\]
for a modest constant $c$ independent of model size. Empirically this upper bound explains the sub-percentage deviations in Table~\ref{tab:merge-combined}.
\end{proposition}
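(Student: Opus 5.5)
We work in the standard floating-point model $\mathrm{fl}(a\circ b)=(a\circ b)(1+\delta)$ with $|\delta|\le\varepsilon_{\rm mach}$. We compare both computed quantities to the exact product $y=x(W^\star+\Delta W)^\top$ and then apply the triangle inequality. Write $\gamma_n=n\varepsilon_{\rm mach}/(1-n\varepsilon_{\rm mach})$, assuming $n\varepsilon_{\rm mach}<1$. The merged path has two error sources. The first is the matrix addition $\widehat W=\mathrm{fl}(W^\star+\Delta W)=W^\star+\Delta W+E_1$ with $|E_1|\le\varepsilon_{\rm mach}|W^\star+\Delta W|$ entrywise. The second is the GEMV, which by the standard backward-error bound for inner products gives $\mathrm{fl}(x\widehat W^\top)=x(\widehat W+E_2)^\top$ with $|E_2|\le\gamma_{d_{\rm in}}|\widehat W|$. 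The unmerged path has GEMV errors on each branch, bounded by $\gamma_{d_{\rm in}}\|x\|_2\|W^\star\|_F$ and $\gamma_{d_{\rm in}}\|x\|_2\|\Delta W\|_F$, plus one final vector addition contributing at most $\varepsilon_{\rm mach}(\|xW^{\star\top}\|_2+\|x\Delta W^\top\|_2)$.

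To turn the entrywise bounds into norm bounds, we use $\|xE^\top\|_2\le\|x\|_2\|E\|_2\le\|x\|_2\|E\|_F$ together with $\||W|\|_F=\|W\|_F$. We also use the triangle inequality $\|W^\star+\Delta W\|_F\le\|W^\star\|_F+\|\Delta W\|_F$. Adding the four contributions, the difference between the two computed outputs is at most
\[
\bigl(3\gamma_{d_{\rm in}}+2\varepsilon_{\rm mach}+O(\varepsilon_{\rm mach}^2)\bigr)\,\|x\|_2\big(\|W^\star\|_F+\|\Delta W\|_F\big).
\]

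The main obstacle is the claim that $c$ is \emph{independent of model size}. The worst-case inner-product bound carries a factor $d_{\rm in}$ through $\gamma_{d_{\rm in}}$, so a naive argument gives $c=O(d_{\rm in})$. I would try two remedies in turn.

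\begin{itemize}
\item First, assume fused multiply-add with pairwise or blocked summation, as used by GEMM kernels. This replaces $\gamma_{d_{\rm in}}$ by $\gamma_{\lceil\log_2 d_{\rm in}\rceil}$ or by $\gamma_{b}$ for a fixed block size $b$, so $c$ depends at most logarithmically on dimension.
\item Second, for a genuinely dimension-free constant, invoke the probabilistic rounding-error analysis of Higham and Mary. Under independent mean-zero rounding errors, it gives $\tilde\gamma_n=O(\sqrt{n}\,\varepsilon_{\rm mach})$ with high probability. This factor can then be absorbed using $\|x\|_2\|W\|_F$ rather than $\|x\|_1$-type bounds.
\end{itemize}

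In the final statement I would make explicit which summation model is assumed, and interpret ``modest constant'' as holding up to that logarithmic or probabilistic factor. This caveat is where the rigor of the claim actually resides.
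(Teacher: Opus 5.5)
Your decomposition is the paper's: its sketch also just adds a GEMV backward error ($c_1\varepsilon_{\rm mach}$) to an addition error ($c_2\varepsilon_{\rm mach}$). Your explicit bound is correct. In fact $2\gamma_{d_{\rm in}}$ suffices, because the two unmerged GEMV errors combine into $\gamma_{d_{\rm in}}\|x\|_2(\|W^\star\|_F+\|\Delta W\|_F)$. The obstacle you flag is exactly the unjustified step in the paper, which simply asserts that $c_1$ does not depend on size.

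The proposition is in fact false as stated, so no argument can remove the dimension dependence without new hypotheses. Take $d_{\rm out}=1$, $W^\star=(1,0,\dots,0)$, $\Delta W=(0,s,\dots,s)$ and $x=(1,s,\dots,s)$. Choose $s$ a power of two with $\varepsilon_{\rm mach}/8\le s^2<\varepsilon_{\rm mach}/2$, so that $\mathrm{fl}(1+s^2)=1$.
\begin{itemize}
\item The merge is exact, and left-to-right accumulation (with or without FMA) makes the merged GEMV return exactly $1$.
\item The unmerged path computes $xW^{\star\top}=1$ and $x\Delta W^\top=(d_{\rm in}-1)s^2$ exactly, then rounds $1+(d_{\rm in}-1)s^2$ once.
\end{itemize}
For $d_{\rm in}\varepsilon_{\rm mach}\le 1$, the left side is therefore at least $(d_{\rm in}-1)s^2-\varepsilon_{\rm mach}$, while $\|x\|_2(\|W^\star\|_F+\|\Delta W\|_F)\le 3$. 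Hence $c=\Omega(d_{\rm in})$. Stating the summation model explicitly and weakening the constant is the right repair. The defect lies in the statement and the paper's sketch, not in your derivation.

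Your two remedies, however, do not deliver what you claim.
\begin{itemize}
\item Blocked summation with a fixed block size $b$ has worst-case constant of order $\gamma_{b+d_{\rm in}/b}$, which is still linear in $d_{\rm in}$. At best it is about $2\sqrt{d_{\rm in}}$, when $b\approx\sqrt{d_{\rm in}}$. Only pairwise summation gives the logarithmic factor.
\item The probabilistic constant $\tilde\gamma_{d_{\rm in}}=O(\sqrt{d_{\rm in}}\,\varepsilon_{\rm mach})$ is not dimension-free. The factor $\sqrt{d_{\rm in}}$ cannot be absorbed by switching to $\|x\|_2\|W\|_F$: your worst-case bound already uses that normalization via Cauchy--Schwarz on $|x|$ and $|w|$. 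Moreover, $\bigl\|\,|x|\,|W|^\top\bigr\|_2$ is of the same order as $\|x\|_2\|W\|_F$, both in the example above and for typical random data, so there is no slack to spend. This route also quietly turns a deterministic statement into a high-probability one.
\end{itemize}
A genuinely size-independent $c$ needs a different hypothesis. One option is accumulation in a higher precision $\varepsilon_{\rm acc}$ with $d_{\rm in}\varepsilon_{\rm acc}\lesssim\varepsilon_{\rm mach}$, as with bf16 inputs and fp32 accumulators in the paper's setting. Another is compensated summation, whose first-order constant is independent of $d_{\rm in}$. A third is zero-mean random data, where rounding errors are weighted by partial sums that themselves cancel (the sharper random-data analysis of Higham and Mary). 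Whichever you choose must appear as an explicit assumption in the proposition.
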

\begin{proof}[Sketch]
Apply standard backward-error bounds for matrix–vector products and addition; each GEMV incurs a relative error at most $c_1\varepsilon_{\rm mach}$, and the subsequent addition incurs $c_2\varepsilon_{\rm mach}$. Summing yields the stated bound with $c=c_1+c_2$.
\end{proof}

\section{Evaluation details: CLL scoring and prompts}
\label{app:cll-details}
\paragraph{CLL scoring.}
Given a prompt $p$ and option string $c$, we compute
\(
{\rm NLL}(c\mid p)=-\sum_{t=1}^{|c|}\log p(c_t\mid p,c_{<t})
\)
with labels masked outside the option span. We evaluate multiple tokenization variants (space, punctuation, newline) and take the minimum NLL (\texttt{score\_choice\_variants}).

\paragraph{Generation scorer.}
We decode greedily with a small \texttt{max\_new\_tokens} and map the first valid option to a class; this underestimates models that emit rationales first, which explains the systematic CLL$>$Gen deltas (Table~\ref{tab:cll} in the main paper).

\paragraph{Prompt templates.}
We use a single, minimal multiple-choice template across datasets, mirroring the formatting used at training time.

\section{Implementation specifics and reproducibility}
\label{app:impl}
\paragraph{Precision/kernels.}
We default to \texttt{bfloat16} on CUDA and \texttt{float32} on CPU; TF32 is enabled for matmul/cuDNN. Gradient scaling is off for bfloat16.

\paragraph{Determinism.}
We set Python/NumPy/PyTorch seeds, disable CuDNN benchmarking, and enable deterministic convolutions where applicable. Minor nondeterminism can persist in fused kernels.

\paragraph{Optimizer/schedules.}
AdamW with weight decay only on adapter parameters, cosine annealing (floor $0.1\times$), gradient clip at $1.0$.

\paragraph{Apply/merge API.}
We replace target \texttt{nn.Linear} modules in-place. \texttt{merge()} builds $\widehat W$ in float; \texttt{unmerge()} restores cached tensors.

\section{Design variants and practical tips}
\label{app:variants}

\subsection{Initialization sensitivity}
\label{app:init-sensitivity}
Table~\ref{tab:init-sensitivity} shows the effect of varying the negative-branch initialization scale. Standard initialization ($A_- \sim \mathcal{N}(0, \sigma^2 I)$, same as $A_+$) causes training instability on 3/8 datasets (NaN loss within 50 steps) due to early destructive cancellation when $\|W_0 + \Delta W\| \approx 0$. Our default 10$\times$ variance reduction eliminates this failure mode while preserving expressivity.

\begin{table}[ht]
\centering
\small
\caption{Effect of $A_-$ initialization scale (Llama-3.2-3B-Instruct, $r{=}32$).}
\label{tab:init-sensitivity}
\begin{tabular}{lccc}
\toprule
$A_-$ init scale & Stable runs & Avg.\ accuracy & Note \\
\midrule
$1.0\sigma$ (standard) & 5/8 & 71.2\% & 3 runs diverge \\
$0.1\sigma$ (ours) & 8/8 & 76.4\% & all stable \\
$0.01\sigma$ & 8/8 & 75.9\% & slower $(-)$ activation \\
\bottomrule
\end{tabular}
\end{table}

\paragraph{Layerwise $\tau$.}
Per-layer scalars $\tau_\ell$ can be learned with negligible overhead; initialize in $\{0.5,1.0\}$ and apply a small L2 prior.

\paragraph{Columnwise magnitudes.}
Freezing $\mathbf m$ is default. Optionally unfreeze after a $1$--$2$ epoch warmup with a $0.1\times$ LR.

\paragraph{Warm starts.}
Initialize $A_-$ with a smaller std to avoid early destructive cancellation; if minus-branch gradients are noisy, briefly train with detached minus-grad, then enable co-training.

\paragraph{Quantization.}
Merge in float, then re-quantize with the same calibration used for LoRA; directional normalization affects only train-time.

\section{MLP layer adaptation}
\label{app:mlp}

\begin{table}[ht]
\centering
\small
\caption{Effect of adapting MLP layers (Llama-3.2-3B-Instruct).}
\label{tab:mlp}
\begin{tabular}{lccc}
\toprule
Target Modules & Params & Avg Acc & $\Delta$ \\
\midrule
$q,k,v,o$ (default) & 1.58M & 76.4 & --- \\
$+$ gate, up & 2.37M & 76.8 & +0.4 \\
$+$ gate, up, down & 3.16M & 77.1 & +0.7 \\
All attn + MLP & 4.74M & 77.2 & +0.8 \\
\bottomrule
\end{tabular}
\end{table}

MLP adaptation provides diminishing returns (+0.8\,pp for 3$\times$ parameters), justifying our attention-only default.

\section{Initialization sensitivity}
\label{app:init}

\begin{table}[ht]
\centering
\small
\caption{Effect of $A_-$ initialization scale (Llama-3.2-3B-Instruct).}
\begin{tabular}{lcc}
\toprule
Initialization & Avg Accuracy & Stability \\
\midrule
Same std ($A_- = A_+$) & 73.8 & Unstable (NaN 3/8) \\
$A_- = 0.5 \times \text{std}(A_+)$ & 75.2 & Stable \\
$A_- = 0.1 \times \text{std}(A_+)$ (ours) & \textbf{76.4} & Most stable \\
$A_- = 0.01 \times \text{std}(A_+)$ & 75.8 & Slow convergence \\
\bottomrule
\end{tabular}
\end{table}

The 10$\times$ variance reduction prevents early destructive cancellation ($W_0 + \Delta W \approx 0$) and ill-conditioned Jacobians.

\section{Asymmetric rank allocation}
\label{app:asym}

\begin{table}[ht]
\centering
\small
\caption{Asymmetric ranks for $\pm$ branches (1.58M params total).}
\label{tab:asym}
\begin{tabular}{cccc}
\toprule
$r_+$ & $r_-$ & Avg Accuracy & $\Delta$ \\
\midrule
16 & 16 & 76.4 & --- \\
24 & 8 & 76.6 & +0.2 \\
28 & 4 & 75.9 & $-$0.5 \\
8 & 24 & 74.8 & $-$1.6 \\
\bottomrule
\end{tabular}
\end{table}

Mild asymmetry ($r_+ > r_-$) can help marginally, but symmetric ranks are near-optimal and simpler.

\section{Failure modes, diagnostics, and case studies}
\label{app:fail}
\paragraph{Concrete failure: CommonsenseQA with $\tau{=}1.0$.}
On Llama-3.2-3B-Instruct, setting $\tau=1.0$ degraded CommonsenseQA from $75.4\%$ to $73.2\%$ (Table~\ref{tab:tau-combined}). Strong negative branches can over-correct commonsense associations, particularly for questions requiring implicit world knowledge. Task-specific or layerwise $\tau$ schedules partially mitigate this.

\paragraph{When \DtwoLoRA underperforms DoRA.}
On ARC-Easy and OpenBookQA, DoRA slightly outperforms \DtwoLoRA (87.6\% vs 87.4\%, 83.4\% vs 82.6\%; Table~\ref{tab:main-combined}). These tasks primarily involve cleaner factual recall where pure directional updates suffice; the signed residual provides minimal benefit.

\paragraph{Vanishing columns.}
If many columns of $W_0+\Delta W$ approach zero early, the clamp activates frequently and $W^\star$ becomes biased toward $W_0$; increase initialization std slightly or reduce residual-path dropout.

\paragraph{Merge mismatch.}
If merged accuracy deviates by $>\!1$\,pp, ensure evaluation mode (dropouts off), matrix-dropout disabled, and no post-merge optimizer step has modified $\widehat W$.

\section{Ethical considerations}
\label{app:ethics}
We use public benchmarks under their respective licenses. Adapter merging simplifies release and reproducibility but inherits the base model’s safety profile; no additional risks are introduced by our adapter.

\section{Broader evaluation logistics}
\label{app:logistics}
\paragraph{Dataset subsampling.}
We subsample training sets to $5{,}000$ examples per dataset with a fixed seed to stabilize wall-clock and variance. Validation uses fixed-size slices (ARC-Challenge uses all $299$).

\paragraph{Tokenization invariance.}
We fix tokenizer special tokens and pad-right to avoid causal-shift interactions across options; generation and CLL use identical tokenization.

\paragraph{Compute budget.}
CLL scoring dominates runtime due to multiple option passes; after merge, removing the residual GEMM yields the speedups reported in Sec.~\ref{sec:merge} of the main paper.

\section{Task characteristics and method performance}
\label{app:task-analysis}
\begin{table}[t]
\centering
\small
\caption{Task properties and \DtwoLoRA gain over LoRA (Llama-3.2-3B-Instruct).}
\label{tab:task-properties}
\begin{tabular}{lcccc}
\toprule
Dataset & Avg.\ context & \# choices & Type & Gain \\
\midrule
WinoGrande & Short  & 2 & Coref        & +4.8 pp \\
HellaSwag  & Medium & 4 & Completion   & +4.8 pp \\
BoolQ      & Long   & 2 & Yes/No       & +3.0 pp \\
CommonsenseQA      & Short  & 5 & Commonsense  & +2.0 pp \\
\bottomrule
\end{tabular}
\end{table}
Gains are largest on tasks where subtle directional adjustments are decisive (coreference, adversarial completion), consistent with the projection’s role as an implicit regularizer and the signed residual’s rank-$2r$ expressivity.

\section{Complete hyperparameter settings}
\label{app:hyperparams}
\begin{table}[ht]
\centering
\small
\caption{Exact hyperparameters for reproducibility. Unless stated, both backbones share settings.}
\label{tab:hyperparams}
\begin{tabular}{lcc}
\toprule
Hyperparameter & Llama-3.2-3B-Instruct & Qwen2.5-7B-Instruct\\
\midrule
Learning rate       & $5\times 10^{-5}$ & $5\times 10^{-5}$ \\
Warmup steps        & 100               & 100 \\
Weight decay        & 0.01              & 0.01 \\
Grad clip norm      & 1.0               & 1.0 \\
Adam $(\beta_1,\beta_2)$ & (0.9, 0.999)  & (0.9, 0.999) \\
Adam $\epsilon$     & $10^{-8}$         & $10^{-8}$ \\
Batch size / accum. & 16 / $\times 2$   & 16 / $\times 2$ \\
Epochs              & 2                 & 1 \\
Rank $r$            & 32                & 16 \\
Scale $\alpha$      & 64 ($\alpha/r=2$) & 32 ($\alpha/r=2$) \\
Residual dropout    & 0.10              & 0.10 \\
Matrix dropout      & 0.0               & 0.0 \\
Targets             & $q,k,v,o$         & $q,k,v,o$ \\
$\tau$              & $\{0.25,0.5,1.0\}$ (fixed) & $\{0.5,1.0\}$ (fixed) \\
Precision           & bfloat16 (CUDA)   & bfloat16 (CUDA) \\
Scheduler           & Cosine (min $0.1\times$) & Cosine (min $0.1\times$) \\
\bottomrule
\end{tabular}
\end{table}

\end{document}